\documentclass{article} 
\usepackage{iclr2024_conference,times}
\usepackage{graphicx}

\usepackage{amsmath}
\usepackage{amsfonts}
\usepackage{amsthm}
\usepackage{enumitem}
\usepackage{bbm}
\usepackage{amssymb}

\newtheorem{lemma}{Lemma}
\newtheorem{theorem}{Theorem}

\newtheorem{assumption}{Assumption}
\newtheorem{proposition}{Proposition}
\newtheorem{corollary}{Corollary}

\DeclareMathOperator*{\Tr}{Tr}

\DeclareMathOperator*{\cov}{Cov}
\DeclareMathOperator*{\myexp}{exp}

\newcommand{\bb}[1]{\mathbb{#1}}
\renewcommand{\cal}[1]{\mathcal{#1}}
\newcommand{\mb}[1]{\mathbf{#1}}

\renewcommand{\a}{\mb{a}}
\newcommand{\A}{\cal{A}}

\newcommand{\E}[2][]{\bb{E}_{#1} \left[ #2 \right]}
\newcommand{\F}{\cal{F}}

\newcommand{\N}{\cal{N}}

\newcommand{\R}{\bb{R}}

\newcommand{\x}{\mb{x}}

\newcommand{\y}{\mb{y}}

\newcommand{\data}{\textup{data}}

\newcommand{\KL}[2]{\textup{KL}(#1 || #2)}

\renewcommand{\exp}[1]{\myexp \left\{ #1 \right\}}

\newcommand{\ind}{\mathbbm{1}}

\newcommand{\lr}[1]{\left(#1\right)}

\newcommand{\lrcb}[1]{\left\{#1\right\}}

\usepackage{framed}
\usepackage{xcolor}

\renewenvironment{leftbar}[2][\hsize]
{
    
    \MakeFramed{\hsize#1\advance\hsize-\width\FrameRestore}
}
{\endMakeFramed}

\newcommand{\m}{\mb{m}}
\newcommand{\Sig}{\mb{\Sigma}}
\renewcommand{\A}{\mb{A}}
\newcommand{\rmd}{\mathrm{d}}

\usepackage{hyperref}
\usepackage{url}

\title{Nearly $d$-Linear Convergence Bounds for\\
Diffusion Models via Stochastic Localization}


\author{Joe Benton\thanks{Department of Statistics, University of Oxford, \texttt{\{benton,doucet,deligian\}@stats.ox.ac.uk}},\; Valentin De Bortoli\thanks{CNRS, ENS Ulm, Paris, \texttt{valentin.debortoli@gmail.com}},\; Arnaud Doucet\footnotemark[1],\; George Deligiannidis\footnotemark[1]}

\iclrfinalcopy 
\begin{document}

\maketitle

\begin{abstract}
Denoising diffusions are a powerful method to generate approximate samples from high-dimensional data distributions. Recent results provide polynomial bounds on their convergence rate, assuming $L^2$-accurate scores. Until now, the tightest bounds were either superlinear in the data dimension or required strong smoothness assumptions. We provide the first convergence bounds which are linear in the data dimension (up to logarithmic factors) assuming only finite second moments of the data distribution. We show that diffusion models require at most $\tilde O(\frac{d \log^2(1/\delta)}{\varepsilon^2})$ steps to approximate an arbitrary distribution on $\R^d$ corrupted with Gaussian noise of variance $\delta$ to within $\varepsilon^2$ in KL divergence. Our proof extends the Girsanov-based methods of previous works. We introduce a refined treatment of the error from discretizing the reverse SDE inspired by stochastic localization.
\end{abstract}

\section{Introduction}

Denoising diffusion models are a recent advance in generative modeling which have produced state-of-the-art results in many domains \citep{sohldickstein2015deep, song2019generative, ho2020denoising, song2021score}, including image and text generation \citep{dhariwal2021diffusion, austin2021structured, ramesh2022hierarchical, saharia2022photorealistic}, text-to-speech synthesis \citep{popov2021grad}, and molecular structure modeling \citep{xu2022geodiff, trippe2023diffusion,watson2023novo}. Denoising diffusion models take data samples, corrupt them through the iterated application of noise, and learn to reverse this noising procedure. For data on $\R^d$, we typically use a stochastic differential equation (SDE) as the noising process. Then, learning the reverse process is equivalent to learning the score of the noised distributions \citep{vincent2011connection, song2019generative, song2021score}. 

Recently, significant progress has been made on improving our theoretical understanding of diffusion models, including several works which have established polynomial convergence bounds for such models \citep{chen2023improved, chen2023sampling, lee2023convergence, li2023towards}. The current state-of-the-art bound without Lipschitz assumptions on the score of the data distribution is provided by \citet{chen2023improved}, who show that diffusion models require at most $\tilde O\big(\frac{d^2 \log^2(1/\delta)}{\varepsilon^2}\big)$ steps to approximate a given distribution to within $\varepsilon^2$ in KL divergence, assuming only a finite second moment of the target distribution and early stopping at time $\delta$. However, \citet{chen2023improved, chen2023sampling} suggest that the iteration complexity ought to scale linearly in the data dimension (see e.g. \citet[Theorem 6]{chen2023sampling}).

In this work, we close this gap. We derive the first convergence bounds for diffusion models which are linear in the data dimension (up to logarithmic factors) without smoothness assumptions. We deduce that an early stopped diffusion model has iteration complexity $\tilde O(\frac{d \log^2(1/\delta)}{\varepsilon^2})$. Our proof builds on \citet{chen2023improved, chen2023sampling}, who use Girsanov's theorem to measure the distance between the true and approximate reverse paths. However, we provide a refined treatment of the time discretization error, using techniques from stochastic calculus to derive a differential inequality for the expected difference between the drift terms at different times. By bounding the terms of this differential inequality, we achieve tighter bounds on the difference between the true and approximate path measures.

A key ingredient in our proof will be Lemma \ref{lem:keySLresult}, which allows us to perform several calculations explicitly. This result is inspired by stochastic localization \citep{eldan2013thin, eldan2020taming}, developed to study the KLS conjecture and applied to sampling. Stochastic localization sampling is equivalent to diffusion models \citep{montanari2023sampling}, letting us transfer insights from stochastic localization to our proof.

\subsection{Diffusion Models}

A diffusion model starts with a stochastic process $(X_t)_{t \in [0,T]}$ constructed by initializing $X_0$ in the data distribution $p_{\data}$ and then evolving according to the Ornstein--Uhlenbeck (OU) SDE
\begin{equation}
\label{eq:forwardOU}
    \rmd X_t = - X_t \rmd t + \sqrt{2} \rmd B_t \quad \text{for } 0 \leq t \leq T,
\end{equation}
where $(B_t)_{t \in [0,T]}$ is a Brownian motion on $\R^d$ \citep{song2021score}. We then learn the dynamics of the reverse process $(Y_t)_{t \in [0,T]}$ defined by $Y_t = X_{T-t}$. If we let $q_t(\x_t)$ denote the marginals of the forward process, then under mild regularity conditions on $p_{\data}$, the reverse process satisfies the SDE
\begin{equation}
\label{eq:reverseSDE}
    \rmd Y_t = \{Y_t + 2 \nabla \log q_{T-t}(Y_t)\} \rmd t + \sqrt{2} \rmd B'_t, \quad \quad Y_0 \sim q_T,
\end{equation}
where $(B'_t)_{t \in [0,T]}$ is another Brownian motion \citep{anderson1982reverse, cattiaux2022time}. We can thus generate samples $\xi \sim p_\data$ by sampling $Y_0 \sim q_T$, running the reverse SDE, and setting $\xi = Y_T$. 

The OU process is a convenient choice of forward process as its transition densities are analytically tractable, with $q_{t | 0}(\x_t | \x_0) = \N(\x_t; \x_0 e^{-t}, \sigma_t^2 I_d)$ where $\sigma_t^2 := 1 - e^{-2t}$. In what follows, we will denote the posterior mean $\m_t(\x_t) := \E[q_{0|t}(\cdot | \x_t)]{X_0}$ and variance $\Sig_t(\x_t) := {\cov}_{q_{0|t}(\cdot | \x_t)}(X_0)$; $\m_t$ is related to the score function via $\nabla \log q_t(\x_t) = - \sigma_t^{-2} \x_t + e^{-t} \sigma_t^{-2} \m_t$ (see Lemma \ref{lem:gradientformulae} below).

For convenience, we will assume throughout that our data distribution $p_\data$ has identity covariance matrix (as is standard in applications), though our analysis holds similarly without this assumption. We also focus on an OU noising process rather than a general noising SDE (as in e.g. \citet{chen2023restoration}), since the OU process is most common in practice. However, our results can be straightforwardly extended to any linear SDE (including in particular the VE SDE \citep{song2021score}).

To simulate \eqref{eq:reverseSDE}, we must make three approximations. First, as we do not have access to $\nabla \log q_t(\x_t)$ directly, we learn an approximation $s_\theta(\x_t, t)$ to $\nabla \log q_t(\x_t)$ for $t \in [0,T]$ by minimising
\begin{equation}
\label{eq:lossfunction}
    \cal{L}(s_\theta) = \int_0^T \E[q_t]{\|s_\theta(X_t, t) - \nabla \log q_t(X_t) \|^2} \rmd t.
\end{equation}
Although $\cal{L}(s_\theta)$ cannot be estimated directly, there are techniques such as denoising or implicit score matching which provide equivalent tractable objectives \citep{hyvarinen2005estimation, vincent2011connection}. In practice, we empirically estimate these objectives using samples drawn from the forward process, which can be analytically simulated. We typically parameterize $s_\theta(\x_t, t)$ via a neural network for a vector of parameters $\theta \in \R^D$ and minimise the objective function using stochastic gradient descent.

Second, since we do not have access to the initial distribution $q_T$ of the reverse process, we instead initialize the reverse process in the standard Gaussian, which we denote $\pi_d$. This is a reasonable approximation since the OU process converges exponentially quickly to $\pi_d$ \citep{bakry2014analysis}.

Third, since \eqref{eq:reverseSDE} is a continuous-time process we must discretize time in order to simulate it. We pick time steps $0 = t_0 < t_1 < \dots < t_N \leq T$, sample $\hat Y_0 \sim \pi_d$ and define $(\hat Y_t)_{t \in [0,T]}$ via the SDE
\begin{equation}
\label{eq:exponentialintegrator}
    \rmd \hat Y_t = \{\hat Y_t + 2 s_\theta(\hat Y_{t_k}, T - t_k)\} \rmd t + \rmd \hat B_t
\end{equation}
for each interval $[t_k, t_{k+1}]$ and $k=0, \dots, N-1$, for a Brownian motion $(\hat B_t)_{t \in [0,T]}$. We use the notation $\gamma_k := t_{k+1} - t_k$ for the step size and denote the marginals of the approximate reverse process by $p_t(\x)$. The sampling scheme defined via \eqref{eq:exponentialintegrator} is known as the exponential integrator \citep{zhang2023fast, debortoli2022convergence, chen2023improved}. An alternative is the Euler--Maruyama (EM) scheme, which replaces the time-dependent drift term in \eqref{eq:exponentialintegrator} with its value at the start of the associated interval \citep{song2021score, chen2023improved}. Both methods give similar asymptotic convergence rates, as indicated by \citet{chen2023improved}, so we focus on the exponential integrator.

Finally, instead of running \eqref{eq:exponentialintegrator} all the way back to the start time, we perform early stopping and set $t_N = T-\delta$ for some small $\delta$. We do this because for non-smooth data distributions $\nabla \log q_t$ can blow up as $t \rightarrow 0$. This means that our model will approximate $q_\delta$ rather than $q_0 = p_\data$, which is acceptable since for small $\delta$ the distance (e.g. in Wasserstein-$p$ metric) between $q_\delta$ and $p_\data$ is small. Early stopping is frequently used in practical applications of diffusion models \citep{song2021score}.

\subsection{Stochastic Localization}

Stochastic localization sampling schemes were developed by \citet{alaoui2022sampling, montanari2023sampling}. To sample from a distribution $p_{\data}$, we construct a measure-valued stochastic process $(\mu_s)_{s \geq 0}$ such that $\mu_s$ ``localizes'' as $s \rightarrow \infty$, meaning that there a.s. exists some $\xi$ such that $\mu_s \rightarrow \delta_{\xi}$ as $s \rightarrow \infty$, and $\xi \sim p_{\data}$. We construct this process by sampling $\xi \sim p_{\data}$ and defining a sequence $(U_s)_{s \geq 0}$ of noisy observations of $\xi$ via
\begin{equation}
\label{eq:SLprocess}
    U_s = s \xi + W_s,
\end{equation}
where $(W_s)_{s \geq 0}$ is a Brownian motion on $\R^d$ \citep{montanari2023sampling}. We then define $\mu_s = \textup{Law}(\xi \;|\; U_s)$, so that $\mu_s$ is a random measure depending on $U_s$. Since $U_s / s \rightarrow \xi$ almost surely as $s \rightarrow \infty$, we see $(\mu_s)_{s \geq 0}$ does indeed localize and $\lim_{s \rightarrow \infty} U_s / s$ is distributed according to $p_{\data}$.

However, constructing $(U_s)_{s \geq 0}$ via \eqref{eq:SLprocess} requires sampling $\xi \sim p_{\data}$, which we cannot do. Fortunately, we avoid this using the observation that if $p_{\data}$ has finite second moments, then $(U_s)_{s \geq 0}$ is equivalent in law to the unique solution to the SDE
\begin{equation}
\label{eq:reverseSLSDE}
    \rmd U_s = \a_s(U_s) \rmd s + \rmd W_s',
\end{equation}
where $(W'_s)_{s \geq 0}$ is a Brownian motion and $\a_s(U_s) = \E[\mu_s]{\xi} = \E{\xi \;|\; U_s}$ \citep{lipster1977statistics, montanari2023sampling}. This allows us to construct the stochastic localization process without direct access to $p_{\data}$, so long as we have access to the function $\a_s(U_s)$. We also define $\A_s(U_s) = \cov(\mu_s)$.

Diffusion models and stochastic localization are equivalent under a time change \citep{montanari2023sampling}. If we define $(X_t)_{t \geq 0}$, $(U_s)_{s \geq 0}$ according to \eqref{eq:forwardOU}, \eqref{eq:SLprocess} respectively and let $t(s) := \frac{1}{2} \log(1 + s^{-1})$, then $(U_s)_{s \geq 0}$ and $(s e^{t(s)} X_{t(s)})_{s \geq 0}$ have the same law. In addition, $\a_s(U_s)$ and $\m_t(X_t)$ have the same law and $\A_s(U_s)$ and $\Sig_t(X_t)$ have the same law when $t = t(s)$. We will sometimes suppress the dependence of $\a_s, \A_s$ and $\m_t, \Sig_t$ on $U_s$ and $X_t$ respectively when the meaning is clear. For more details and an explicit derivation of the equivalence, see Appendix \ref{app:SLanddiffusions}.

We now recall two lemmas from the stochastic localization literature. The first can be found in \citet{eldan2013thin, alaoui2022information}. The second follows from the first plus the argument in \citet{eldan2020taming}. We provide proofs for both results in Appendix \ref{app:proofofSLresults} for the reader's convenience.

\begin{proposition}[\citet{alaoui2022information}, Theorem 2]
\label{prop:SLSDE}
If we define $L_s(\x) = \frac{\rmd \mu_s}{\rmd p_{\data}}(\x)$, then $\rmd L_s(\x) = L_s(\x) (\x - \a_s) \cdot \rmd W'_s$ for all $s \geq 0$.
\end{proposition}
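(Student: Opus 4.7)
The plan is to represent $L_s(\x)$ explicitly by Bayes' rule and then apply Itô's formula using the innovation SDE $\d U_s = \a_s \d s + \d W'_s$. Since $U_s \mid \x_\ast = \x$ is Gaussian with mean $s\x$ and covariance $sI_d$, the posterior density has the exponential family form
\begin{equation*}
    L_s(\x) = \frac{\d \mu_s}{\d \mu}(\x) = \frac{\exp\{f_s(\x)\}}{Z_s}, \qquad f_s(\x) := U_s \cdot \x - \tfrac{s}{2}\|\x\|^2, \qquad Z_s := \int \exp\{f_s(\y)\}\, \mu(\d\y),
\end{equation*}
after cancelling the $\x$-independent factor $\exp\{-\|U_s\|^2/(2s)\}$ between numerator and normalizer.

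Next I would compute the SDEs of the numerator $N_s(\x) := \exp\{f_s(\x)\}$ and denominator $Z_s$ separately. Using $\d f_s(\x) = \x \cdot \d U_s - \tfrac{1}{2}\|\x\|^2 \d s = (\x \cdot \a_s - \tfrac{1}{2}\|\x\|^2)\d s + \x \cdot \d W'_s$ together with Itô's formula, the $\tfrac{1}{2}\|\x\|^2 \d s$ correction cancels the drift in $\d f_s(\x)$, giving
\begin{equation*}
    \d N_s(\x) = N_s(\x)\bigl[(\x \cdot \a_s)\d s + \x \cdot \d W'_s\bigr].
\end{equation*}
Differentiating $Z_s = \int N_s(\y)\, \mu(\d\y)$ under the integral and using $L_s(\y)\mu = \mu_s$ together with $\a_s = \E[\mu_s]{\y}$ then yields
\begin{equation*}
    \d Z_s = Z_s\bigl[\|\a_s\|^2 \d s + \a_s \cdot \d W'_s\bigr].
\end{equation*}

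Finally, I would apply the Itô quotient rule to $L_s(\x) = N_s(\x)/Z_s$. The cross-variation is $\d \langle N(\x), Z\rangle_s = N_s(\x) Z_s (\x \cdot \a_s)\d s$ and the quadratic variation of $Z_s$ is $Z_s^2\|\a_s\|^2\d s$, so the quotient rule gives
\begin{equation*}
    \d L_s(\x) = L_s(\x)\bigl[(\x \cdot \a_s)\d s + \x \cdot \d W'_s - \|\a_s\|^2\d s - \a_s \cdot \d W'_s + \|\a_s\|^2\d s - (\x \cdot \a_s)\d s\bigr],
\end{equation*}
and all four drift contributions cancel in pairs, leaving the claimed $L_s(\x)(\x - \a_s)\cdot \d W'_s$. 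The main subtlety is bookkeeping: one must work in the innovation filtration so that $W'_s$ is a Brownian motion adapted to $\sigma(U_r : r\leq s)$, and one must verify that the differentiation under the integral sign in the computation of $\d Z_s$ is justified (finite second moment of $\mu$ suffices, since it makes $\a_s$ integrable and $L_s$ an honest martingale with respect to $\mu$).
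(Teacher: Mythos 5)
Your proof is correct and takes essentially the same route as the paper. Both start from the same Bayes/exponential-tilt representation $L_s(\x) = e^{f_s(\x)}/Z_s$ with $f_s(\x) = U_s \cdot \x - \tfrac{s}{2}\|\x\|^2$, both compute $\d Z_s = Z_s(\a_s \cdot \d U_s) = Z_s\bigl[\|\a_s\|^2 \d s + \a_s \cdot \d W'_s\bigr]$, and both extract the final SDE by It\^o's formula. The only cosmetic difference is that you apply the It\^o quotient rule directly to $N_s(\x)/Z_s$, whereas the paper works with $\log L_s(\x) = f_s(\x) - \log Z_s$ and exponentiates at the very end; the drift cancellations are the same in either bookkeeping.
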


\begin{proposition}[\citet{eldan2020taming}, Equation 11]
\label{prop:SLcovarianceasderivative}
For all $s \geq 0$, $\frac{\rmd}{\rmd s} \E{\A_s} = - \E{\A_s^2}$.
\end{proposition}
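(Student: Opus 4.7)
The plan is to apply Itô's formula to the decomposition $\A_s = \E[\mu_s]{\x \x^\top} - \a_s \a_s^\top$, using Proposition \ref{prop:SLSDE} to read off the dynamics of the relevant integrals against $\mu$, and then take expectations so the martingale parts vanish and only the Itô correction survives.

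First I would write $M_s := \int \x \x^\top L_s(\x) \, \mu(\d \x)$ and $\a_s = \int \x L_s(\x) \, \mu(\d \x)$, so that $\A_s = M_s - \a_s \a_s^\top$. Plugging the SDE $\d L_s(\x) = L_s(\x) (\x - \a_s) \cdot \d W_s'$ from Proposition \ref{prop:SLSDE} into each integral (and interchanging integration against $\mu$ with the stochastic integral, justified by finiteness of second moments) gives
\begin{equation*}
\d \a_s = \E[\mu_s]{\x (\x - \a_s)^\top} \d W_s' = \A_s \, \d W_s', \qquad \d M_s = \E[\mu_s]{\x \x^\top (\x - \a_s)^\top} \, \d W_s',
\end{equation*}
so both $\a_s$ and $M_s$ are (local) martingales with no drift.

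Next I would apply Itô's product formula to $\a_s \a_s^\top$. Since $(\d \a_s)_i = \sum_k (\A_s)_{ik} \d (W_s')_k$, the quadratic covariation is $\d \langle (\a_s)_i, (\a_s)_j \rangle = \sum_k (\A_s)_{ik} (\A_s)_{jk} \, \d s = (\A_s^2)_{ij} \, \d s$ using symmetry of $\A_s$. Hence
\begin{equation*}
\d (\a_s \a_s^\top) = (\d \a_s) \a_s^\top + \a_s (\d \a_s)^\top + \A_s^2 \, \d s,
\end{equation*}
and combining with the martingale expression for $\d M_s$ gives
\begin{equation*}
\d \A_s = \d M_s - (\d \a_s) \a_s^\top - \a_s (\d \a_s)^\top - \A_s^2 \, \d s.
\end{equation*}
Taking expectations, the three martingale increments on the right-hand side vanish (after a localization argument if needed) and I am left with $\frac{\d}{\d s} \E{\A_s} = - \E{\A_s^2}$, as desired.

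The only delicate step is the exchange of expectation and stochastic integration needed to conclude the martingale terms have zero expectation; the rest is a mechanical Itô calculation. I would handle this by a standard localization: stop at $\tau_n = \inf\{s : \|\a_s\| \vee \|M_s\| \vee \|\A_s\| > n\}$, derive the identity for $\E{\A_{s \wedge \tau_n}}$, and then pass $n \to \infty$ using the finite second-moment hypothesis on $\mu$ (which controls $\E{\|\x\|^2 L_s(\x)}$ uniformly on bounded intervals of $s$ since $L_s$ is a nonnegative martingale with $\E{L_s} = 1$).
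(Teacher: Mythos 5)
Your proposal is correct and takes essentially the same route as the paper: both derive $\d\a_s = \A_s\,\d W_s'$ from Proposition~\ref{prop:SLSDE}, identify the quadratic variation $\d\langle\a_s,\a_s\rangle = \A_s^2\,\d s$, and use that the conditional second moment $\int \x\x^\top L_s(\x)\,\mu(\d\x)$ has constant expectation. The paper phrases step two as an appeal to It\^o's isometry and step three as the identity $\E{\A_s} = \bb{E}_\mu[\x^{\otimes 2}] - \E{\a_s^{\otimes 2}}$ rather than spelling out the It\^o product rule and the localization, but the underlying computation is identical; your version is simply a more explicit (and slightly more careful) rendering of the same argument.
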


Translating Proposition \ref{prop:SLcovarianceasderivative} into the language of diffusion models, using that $\A_s$ and $\Sig_t$ are equal in law when $t = \frac{1}{2} \log(1 + s^{-1})$, we get the following directly from Proposition \ref{prop:SLcovarianceasderivative} and the chain rule.

\begin{lemma}
\label{lem:keySLresult}
For all $t > 0$, $\frac{\sigma_t^3}{2 \dot \sigma_t} \frac{\rmd}{\rmd t} \E{\Sig_t} = \E{\Sig_t^2}$.
\end{lemma}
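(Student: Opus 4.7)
The proof plan is essentially a chain-rule bookkeeping exercise starting from Proposition \ref{prop:SLcovarianceasderivative}, with the bulk of the work being the coordinate change between the stochastic-localization time $s$ and the diffusion time $t$.

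First, since the paper states that $\A_s(U_s)$ and $\Sig_t(X_t)$ have the same law whenever $t = t(s) := \tfrac{1}{2}\log(1 + s^{-1})$, I would immediately conclude that $\E{\A_s^k} = \E{\Sig_{t(s)}^k}$ for $k=1,2$. Proposition \ref{prop:SLcovarianceasderivative} therefore becomes
\[
\frac{\d}{\d s} \E{\Sig_{t(s)}} = - \E{\Sig_{t(s)}^2}.
\]
Applying the chain rule on the left gives $\dot t(s) \cdot \frac{\d}{\d t}\E{\Sig_t}\big|_{t=t(s)} = -\E{\Sig_t^2}\big|_{t=t(s)}$.

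Next I would compute $\dot t(s)$ explicitly. Writing $t(s) = \tfrac{1}{2}(\log(s+1) - \log s)$ gives $\dot t(s) = -\tfrac{1}{2 s(s+1)}$. Substituting yields the clean identity
\[
\frac{1}{2 s (s+1)} \frac{\d}{\d t}\E{\Sig_t} = \E{\Sig_t^2},
\]
so it remains only to check that $\tfrac{1}{2s(s+1)}$ equals $\tfrac{\sigma_t^3}{2\dot\sigma_t}$ under the change of variables. Inverting $t(s)$ gives $e^{-2t} = s/(s+1)$, hence $\sigma_t^2 = 1 - e^{-2t} = 1/(s+1)$, which produces $s+1 = 1/\sigma_t^2$ and $s = (1-\sigma_t^2)/\sigma_t^2$. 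Then $s(s+1) = (1-\sigma_t^2)/\sigma_t^4$. Differentiating $\sigma_t^2 = 1 - e^{-2t}$ gives $\dot\sigma_t = (1-\sigma_t^2)/\sigma_t$, i.e.\ $1-\sigma_t^2 = \sigma_t\dot\sigma_t$, so finally
\[
\frac{1}{2s(s+1)} = \frac{\sigma_t^4}{2(1-\sigma_t^2)} = \frac{\sigma_t^4}{2\sigma_t \dot\sigma_t} = \frac{\sigma_t^3}{2\dot\sigma_t}.
\]
Plugging this into the displayed identity yields the lemma.

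There is no real obstacle here once the equivalence of laws has been granted; the only mild care needed is ensuring that the equality in law $\A_s \stackrel{\text{d}}{=} \Sig_{t(s)}$ transfers to moments (which requires $\E{\|\A_s\|}, \E{\|\A_s\|^2}<\infty$, and is inherited from the finite second moment of $p_\data$ and the boundedness of conditional covariances of a log-concavely-smoothed measure). Everything else is a one-line chain rule followed by the algebraic substitutions above.
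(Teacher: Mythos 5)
Your proof is correct and takes exactly the route the paper describes (the paper presents the lemma as following ``immediately from Proposition \ref{prop:SLcovarianceasderivative} and an application of the chain rule'' without writing out the details); your computation of $\dot t(s) = -\tfrac{1}{2s(s+1)}$ and the algebraic identification $\tfrac{1}{2s(s+1)} = \tfrac{\sigma_t^3}{2\dot\sigma_t}$ via $1-\sigma_t^2 = \sigma_t\dot\sigma_t$ are both right.
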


Lemma \ref{lem:keySLresult} is the key insight that allows us to control the discretization error more precisely than previous works. In Lemma \ref{lem:gradientformulae}, we will see that $\Sig_t$ is related to the Jacobian of the score. Control of $\Sig_t$ via Lemma \ref{lem:keySLresult} will thus allow us to control time-discretization terms $E_{s,t}$ (defined in Section \ref{sec:mainproofs}).

\subsection{Related Work}

\paragraph{Convergence of diffusion models} Initial results on the convergence of diffusion models required restrictive assumptions on the data distribution such as a log-Sobolev inequality \citep{lee2022convergence, yang2022convergence}, or produced bounds that were non-quantitative \citep{pidstrigach2022score,  liu2022let} or exponential in the problem parameters \citep{debortoli2021diffusion,debortoli2022convergence, block2022generative}. Recently however, several works have proven polynomial convergence rates for diffusion models \citep{chen2023improved, chen2023sampling, lee2023convergence, li2023towards}.

First, \citet{chen2023sampling} gave polynomial TV error bounds, assuming a Lipschitz score for all $t \geq 0$. They used Girsanov's theorem to measure the KL divergence between the true and approximate reverse processes (similar to \citet{song2021maximum}), with an approximation argument since the standard assumptions for Girsanov's theorem do not hold in their setting. Second, \citet{chen2023improved} developed this method, introducing a tighter bound on the drift terms and an exponentially decaying sequence of time steps. They provide two bounds on the KL error. The first (Theorem 1) is linear in the data dimension $d$ but requires Lipschitz scores for $t \geq 0$ and depends quadratically on the Lipschitz constant. This is unideal since the Lipschitz assumption excludes many distributions of interest, such as those supported on a submanifold. In addition, the Lipschitz constant can hide additional dimension dependence in some cases (such as when the data are approximately supported on a submanifold). The second (Theorem 2) uses early stopping and applies to any data distribution with finite second moments but is quadratic in $d$. The latter gives the current best bound on the iteration complexity of diffusion models without smoothness assumptions of $\tilde O\big(\frac{d^2 \log^2(1/\delta)}{\varepsilon^2}\big)$\footnote{Note that their bound is stated incorrectly in their abstract; the correct bound can be found in their Table 1.}.

In parallel, \citet{lee2023convergence} derived weaker polynomial convergence bounds using a $\chi^2$-based analysis and a method to convert $L^\infty$-accurate score estimates into $L^2$-accurate estimates. Most recently, \citet{li2023towards} demonstrated bounds for several deterministic and non-deterministic discrete-time sampling methods using elementary techniques, assuming control of the approximate score and its derivative. We summarise the results of \citet{chen2023improved, chen2023sampling} and compare them to ours in Table \ref{tab:previouswork}.

\begin{table}[t!] 
    \centering
    \def\arraystretch{1.2}
    \begin{tabular}{|c|c|c|c|}
        \hline
        Regularity condition & Metric & Complexity & Result \\
        \hline
        $\forall t$, $\nabla \log q_t$ $L$-Lipschitz & $\textup{TV}(q_0, p_T)^2$ & $\tilde O \big( \frac{d L^2}{\varepsilon^2} \big)$ & \citep[Thm 2]{chen2023sampling} \\
        $\forall t$, $\nabla \log q_t$ $L$-Lipschitz & $\KL{q_0}{p_T}$ & $\tilde O \big( \frac{d L^2}{\varepsilon^2} \big)$ & \citep[Thm 1]{chen2023improved} \\
        None & $\KL{q_\delta}{p_{t_N}}$ & $\tilde O \big( \frac{d^2 \log^2(1/\delta)}{\varepsilon^2} \big)$ & \citep[Thm 2]{chen2023improved} \\
        None & $\KL{q_\delta}{p_{t_N}}$ & $\tilde O \big( \frac{d \log^2(1/\delta)}{\varepsilon^2} \big)$ & This work: Corollary \ref{cor:iterationcomplexity}
        \hspace{2mm} \\
        \hline
    \end{tabular}
    \caption{Summary of previous bounds and our results. Bounds expressed in terms of the number of steps required to guarantee an error of at most $\varepsilon^2$ in the stated metric, assuming perfect score estimation. We assume $p_{\data}$ has finite second moments and is normalized so that $\cov(p_\data) = I_d$.}
    \label{tab:previouswork}
\end{table}

In addition, several works have studied the convergence properties of deterministic or approximately deterministic sampling schemes based on diffusion models \citep{chen2023probability, chen2023restoration, albergo2023building, albergo2023stochastic, benton2023error, li2023towards}. Other work has focused on the problem of score estimation. In particular, \citet{oko2023diffusion} bound the error when approximating the score with a neural network and show that diffusion models are approximately minimax optimal for a certain class of target distributions, while \citet{chen2023score} study the sample complexity and convergence properties of diffusion models when the data lies on a linear submanifold.

\paragraph{Stochastic localization} Stochastic localization was developed by Ronen Eldan to study isoperimetric inequalities \citep{eldan2013thin} such as the Kannan--Lov{\'{a}}sz--Simonovits (KLS) conjecture \citep{kannan1995isoperimetric} and the thin shell conjecture \citep{anttila2003central, bobkov2003central}. It was based on the original localization methodologies of Lov{\'{a}}sz, Simonovits and Kannan \citep{lovasz1993random, kannan1995isoperimetric} used to study high-dimensional and isoperimetric inequalities. Subsequently, stochastic localization has been used to make significant progress towards the KLS conjecture \citep{lee2017eldan, chen2021almost}, as well as to prove measure decomposition results \citep{eldan2020taming, alaoui2022information} and for studying mixing times of Markov Chains \citep{chen2022localization}. Recently, several works developed algorithmic sampling techniques based on stochastic localization \citep{alaoui2022sampling, montanari2023posterior}, and \citet{montanari2023sampling} has shown that these sampling approaches are equivalent in the Gaussian setting to diffusion models.

\paragraph{Concurrent work:} After the first version of our work was made publicly available, an independent work appeared by \cite{conforti2023score} who derive similar bounds. In contrast to our work, they take a stochastic control perspective and work under a finite Fisher information smoothness condition. This condition can be removed with early stopping, giving bounds that are linear in data dimension up to logarithmic factors, similar to our own.

\section{Main Results}
\label{sec:mainresults}

The core assumption required for our main result is the following control of the score approximation.
\begin{assumption}
\label{ass:scoreapproximation}
The score approximation function $s_\theta(\x, t)$ satisfies
\begin{equation}
\label{eq:scoreapproximation}
    \sum_{k=0}^{N-1} \gamma_k \E[q_{t_k}]{\|\nabla \log q_{T-t_k}(X_{t_k}) - s_\theta(X_{t_k}, T-t_k)\|^2} \leq \varepsilon_{\textup{score}}^2.
\end{equation}
\end{assumption}
We can view \eqref{eq:scoreapproximation} as a time-discretized version of $\cal{L}(s_\theta)$, so Assumption \ref{ass:scoreapproximation} suggests we learn a score approximation with $L^2$ error at most $\varepsilon_\textup{score}$, adjusting for the discretization of the reverse process. Though this assumption is standard in the literature \citep{chen2023improved,chen2023score}, there may be statistical barriers to efficient estimation of the score in high dimensions \citep{biroli2023generative, ghio2023sampling} and Assumption \ref{ass:scoreapproximation} may not capture the full practical difficulty of learning the score.

\begin{assumption}
\label{ass:secondmoment}
The data distribution $p_\data$ has finite second moments, and $\cov(p_\data) = I_d$.
\end{assumption}
The first part of Assumption \ref{ass:secondmoment} is required for the convergence of the forward SDE. We include the second part for convenience when stating our results. Our analysis is not dependent on it, and we outline how to adapt our proofs for a general covariance matrix in Appendix \ref{app:generalcovariance}.

\begin{theorem}
\label{thm:KLearlystoppedbound}
Suppose that Assumptions \ref{ass:scoreapproximation} and \ref{ass:secondmoment} hold, that $T \geq 1$, and that there is some $\kappa > 0$ such that for each $k = 0, \dots, N-1$ we have $\gamma_k \leq \kappa \min\{1, T-t_{k+1}\}$. Then,
\begin{equation*}
    \KL{q_{\delta}}{p_{t_N}} \lesssim \varepsilon^2_{\textup{score}} + \kappa^2 d N + \kappa d T + d e^{-2T},
\end{equation*}
where $f_1 \lesssim f_2$ denotes that there is a universal constant C such that $f_1 \leq C f_2$.
\end{theorem}
This is our main bound, and it consists of three parts. The first term $\varepsilon_{\textup{score}}^2$ measures the error from using a learned rather than exact score. The second terms $\kappa^2 d N + \kappa d T$ are due to the discretization of the reverse SDE. The final term $d e^{-2T}$ controls the convergence of the forward SDE.

\begin{figure}
    \centering
    \includegraphics[width=0.8\textwidth]{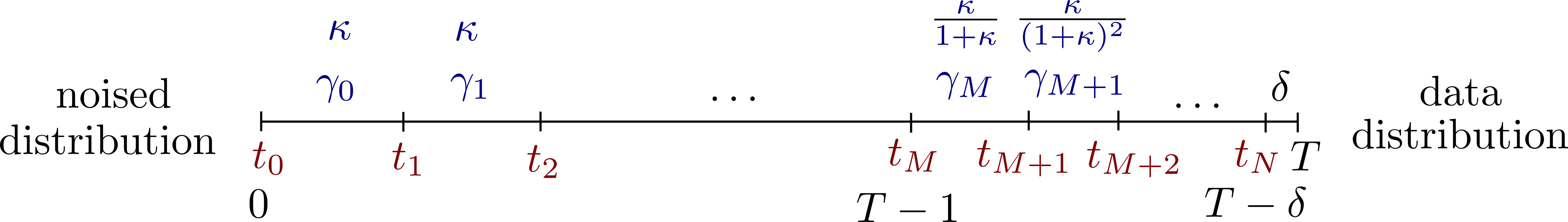}
    \caption{Illustration of a typical choice of step sizes satisfying $\gamma_k \leq \kappa \min\{1, T-t_{k+1}\}$.}
    \label{fig:stepsizes}
\end{figure}

We interpret $\kappa$ as controlling the maximum step size; $\gamma_k$ is bounded by $\kappa$ for $t \in [0,T-1]$, and for $t \in [T-1, T]$ the condition $\gamma_k \leq \kappa(T-t_{k+1})$ forces $\gamma_k$ to decay exponentially at rate $(1+\kappa)^{-1}$. We visualise this in Figure \ref{fig:stepsizes}. As in Corollary \ref{cor:iterationcomplexity} below, for any $N$ there is a choice of time steps such that $\kappa = \tilde O(1/N)$, up to factors which are linear in $T$ and logarithmic in $1/\delta$. Since $T$ will scale logarithmically in $d$ and $\varepsilon_{\textup{score}}$, we think of the second terms in Theorem \ref{thm:KLearlystoppedbound} as scaling like $\tilde O(d/N)$.

We expect the first term $\varepsilon_\textup{score}^2$ to scale linearly in $d$ in many cases, e.g. if the target distribution were the product of i.i.d. components. The convergence of the forward process is also linear in $d$. As such, Theorem \ref{thm:KLearlystoppedbound} improves upon the previous state-of-the-art bounds \citep{chen2023improved, chen2023sampling}, which either require strong smoothness assumptions on $p_{\data}$ or have at least quadratic dependence on $d$.

We next show that given $N$ we can choose a suitable sequence of time steps. This results in a bound on the iteration complexity of the diffusion model.

\begin{corollary}
\label{cor:iterationcomplexity}
For $T \geq 1$, $\delta < 1$ and $N \geq \log(1/\delta)$, there exist $0 = t_0 < t_1 < \dots < t_N = T-\delta$ such that for some $\kappa = \Theta\big(\frac{T + \log(1/\delta)}{N}\big)$ we  have $\gamma_k \leq \kappa \min\{1, T-t_{k+1}\}$ for each $k = 0, \dots, N-1$. Then, if we take $T = \frac{1}{2} \log \big(\frac{d}{\varepsilon_\textup{score}^2}\big)$ and $N = \Theta\big(\frac{d (T + \log(1/\delta))^2}{\varepsilon_\textup{score}^2}\big)$, we have $\KL{q_{\delta}}{p_{t_N}} = O(\varepsilon_{\textup{score}}^2)$. Hence, the diffusion model requires at most $\tilde O\big(\frac{d \log^2(1/\delta)}{\varepsilon^2}\big)$ steps to approximate $q_\delta$ to within $\varepsilon^2$ in KL divergence, assuming a sufficiently accurate score estimator.
\end{corollary}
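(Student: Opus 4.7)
The plan has two parts: first construct a discretization schedule satisfying the stated step-size constraint with $\kappa = \Theta((T+\log(1/\delta))/N)$, and then substitute this $\kappa$ together with the prescribed $T$ and $N$ into Theorem \ref{thm:KLearlystoppedbound} to obtain the iteration complexity.

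For the construction, I would work in the reversed variable $u_k := T - t_k$, so that $u_k$ decreases from $T$ down to $\delta$ and the constraint $\gamma_k \leq \kappa \min\{1, T - t_{k+1}\}$ becomes $u_k - u_{k+1} \leq \kappa \min\{1, u_{k+1}\}$. On the ``bulk'' region $u_{k+1} \geq 1$ this reads $u_k - u_{k+1} \leq \kappa$, so a uniform grid of spacing $\kappa$ is admissible and crosses from $u = T$ down to $u = 1$ in $\lceil (T-1)/\kappa \rceil$ steps. On the ``tail'' region $u_{k+1} < 1$ the constraint reads $u_{k+1} \geq u_k/(1+\kappa)$, so a geometric grid with ratio $(1+\kappa)^{-1}$ is admissible and reaches $u = \delta$ in $\lceil \log(1/\delta)/\log(1+\kappa) \rceil$ further steps. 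A short case analysis based on $\log(1+\kappa) \geq \kappa/(1+\kappa)$ shows that for a sufficiently large absolute constant $c$ the greedy schedule with $\kappa = c(T+\log(1/\delta))/N$ uses at most $N$ steps (using $N \geq \log(1/\delta)$), and any remaining budget can be spent by subdividing, which only tightens the constraint. Inverting this relation gives $\kappa = \Theta((T+\log(1/\delta))/N)$.

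Substituting into Theorem \ref{thm:KLearlystoppedbound} yields
\begin{equation*}
\KL{q_\delta}{p_{t_N}} \lesssim \varepsilon_{\textup{score}}^2 + \kappa^2 d N + \kappa d T + d e^{-2T}.
\end{equation*}
With $T = \tfrac{1}{2}\log(d/\varepsilon_{\textup{score}}^2)$ the forward-process term becomes $d e^{-2T} = \varepsilon_{\textup{score}}^2$, and with $\kappa = \Theta((T+\log(1/\delta))/N)$ the two discretization terms become $\Theta(d(T+\log(1/\delta))^2/N)$ and $\Theta(dT(T+\log(1/\delta))/N)$ respectively; choosing $N = \Theta(d(T+\log(1/\delta))^2/\varepsilon_{\textup{score}}^2)$ forces both to $O(\varepsilon_{\textup{score}}^2)$. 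Since $T$ is only logarithmic in $d/\varepsilon_{\textup{score}}^2$, absorbing logarithmic factors in $d$ and $\varepsilon_{\textup{score}}$ into $\tilde O$ and identifying $\varepsilon$ with $\varepsilon_{\textup{score}}$ yields the advertised $\tilde O(d\log^2(1/\delta)/\varepsilon^2)$ iteration complexity.

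I do not expect a conceptual difficulty; everything reduces to carefully tuning $\kappa$, $T$, $N$ and plugging into an already established bound. The only mildly delicate point is the step-count estimate in the construction, which requires separately handling the regimes $\kappa \leq 1$ (where $\log(1+\kappa) \asymp \kappa$) and $\kappa \geq 1$ (where $\log(1+\kappa) \asymp \log \kappa$), and verifying that the hidden constant in $\kappa = \Theta((T+\log(1/\delta))/N)$ can be chosen uniformly over all $T, \delta, N$ satisfying the stated hypotheses.
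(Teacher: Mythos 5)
Your proposal is correct and follows essentially the same route as the paper's proof: the paper also constructs the schedule by placing half the points linearly on $[0,T-1]$ (uniform spacing, admissible when $u_{k+1}\geq 1$) and half geometrically on $[T-1,T-\delta]$ (exponentially decaying $T-t_k$, admissible when $u_{k+1}<1$), then reads off $\kappa = \Theta\bigl((T+\log(1/\delta))/N\bigr)$ and plugs the prescribed $T$, $N$ into Theorem~\ref{thm:KLearlystoppedbound} exactly as you do. The only cosmetic difference is that you phrase the construction as a greedy step-count argument and flag the $\kappa\lessgtr 1$ case split explicitly, whereas the paper fixes $N/2$ points per region up front and uses $e^x\leq 1+(e-1)x$ for $x\leq 1$ together with $N\geq\log(1/\delta)$ to dispatch the same concern; both are minor variants of the identical idea.
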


Corollary \ref{cor:iterationcomplexity} shows that by making a suitable choice of $T$, $N$ and $t_0, \dots, t_N$ we achieve a bound on the iteration complexity which is linear in the data dimension (up to logarithmic factors) under minimal smoothness assumptions, resolving the question raised in \citet{chen2023improved}. The proof of Corollary \ref{cor:iterationcomplexity} is deferred to Appendix \ref{app:corproof}, where we show that taking half the time steps to be linearly spaced between $0$ and $T-1$ and half to be exponentially spaced between $T-1$ and $T-\delta$, as illustrated in Figure \ref{fig:stepsizes} is sufficient. This choice is similar to that made in \citet{chen2023improved}, reflecting their observation that exponentially decaying time steps appear optimal, at least theoretically.

The tightest previous bounds on the iteration complexity were obtained by \citet{chen2023improved, chen2023sampling}. Assuming only finite second moments of $p_{\data}$, \citet{chen2023improved} showed that diffusion models with early stopping have an iteration complexity of $\tilde O\big(\frac{d^2 \log^2(1/\delta)}{\varepsilon^2}\big)$. Alternatively, \citet{chen2023improved, chen2023sampling} showed that if the score is $L$-Lipschitz for all $t \geq 0$ then the iteration complexity is $\tilde O\big(\frac{d L^2}{\varepsilon^2}\big)$. The Lipschitz assumption can be removed using early stopping at the cost of an additional factor of $1/\delta^4$, arising from the fact that for an arbitrary data distribution the Lipschitz constant of $\nabla \log q_t$ explodes at rate $1/t^2$ as $t \rightarrow 0$ \citep[Lemma 20]{chen2023sampling}. As the distance between $q_0$ and $q_\delta$ scales with $d^{1/2}$ (e.g. in Wasserstein-$p$ metric), for a fixed Wasserstein-plus-KL (or Wasserstein-plus-TV) error we should scale $\delta$ proportionally to $d^{-1/2}$. As a result, for a fixed approximation error for $p_\data$, the bounds of \citet{chen2023sampling} and \citet[Theorem 2]{chen2023improved} also scale superlinearly with $d$.

\section{Proof of Theorem \ref{thm:KLearlystoppedbound}}
\label{sec:mainproofs}

We now provide the proof of Theorem \ref{thm:KLearlystoppedbound}, which comes in three steps. We view Step 1 as our main novel contribution, while Steps 2 and 3 closely follow \citet{chen2023improved, chen2023sampling}.

\textbf{Step 1:} We bound the error from discretizing the reverse SDE (see Lemma \ref{lem:discretizationerror} below). Previous works bound $E_{s,t} := \E{\|\nabla \log q_{T-t}(Y_t) - \nabla \log q_{T-s}(Y_s)\|^2}$ using a Lipschitz assumption. We use a new It\^o calculus argument to get a differential inequality for $E_{s,t}$ (Lemmas \ref{lem:gradlogSDE} and \ref{lem:diffinequality}), relate the coefficients of this inequality to $\m_t$ and $\Sig_t$ using known results (Lemma \ref{lem:gradientformulae}), and bound the differential inequality coefficients using our key Lemma \ref{lem:keySLresult} (resulting in Lemmas \ref{lem:expectationbounds} and \ref{lem:E12bounds}).

\textbf{Step 2:} We bound the KL distance between the path measures of the true and approximate reverse process (Lemma \ref{lem:pathmeasures}). We use an analogous Girsanov-based method to \citet{chen2023improved, chen2023sampling}.

\textbf{Step 3:} We use the data processing inequality to bound $\KL{q_{\delta}}{p_{t_N}}$ in terms of the distance between reverse path measures and the distance between $q_T$ and $\pi_d$. We bound the former using Step 2 and the latter using the convergence of the OU process (Proposition \ref{prop:convergenceforward}), as in \citet{chen2023improved, chen2023sampling}.

Rather than work with two processes $(Y_t)_{t \in [0,T]}$ and $(\hat Y_t)_{t \in [0,T]}$ which are solutions to \eqref{eq:reverseSDE} and \eqref{eq:exponentialintegrator} under the same probability measure, it is more convenient to follow \citet{chen2023improved, chen2023sampling} and fix a single process $(Y_t)_{t \in [0,T]}$ and then define $Q$ and $P^{\pi_d}$ to be two different probability measures such that $(Y_t)_{t \in [0,T]}$ is a solution to \eqref{eq:reverseSDE} under $Q$ and to \eqref{eq:exponentialintegrator} under $P^{\pi_d}$. In addition, we define a probability measure $P^{q_T}$ under which  $(Y_t)_{t \in [0,T]}$ is a solution to \eqref{eq:exponentialintegrator} but with the initial condition $Y_0 \sim q_T$.

\subsection{Bounding the Discretization Error}

\begin{lemma}[Bound on discretization error]
\label{lem:discretizationerror}
If $(Y_t)_{t \in [0,T]}$ is the solution to the SDE \eqref{eq:reverseSDE}, then we have
\begin{equation*}
    \sum_{k=0}^{N-1} \int_{t_k}^{t_{k+1}} \E[Q]{\|\nabla \log q_{T-t}(Y_t) - \nabla \log q_{T-t_k}(Y_{t_k})\|^2} \rmd t \lesssim \kappa^2 d N + \kappa d T.
\end{equation*}
\end{lemma}

\begin{proof}
We start by controlling $E_{s,t}$ for $0 \leq s \leq t < T$, where $(Y_t)_{t \geq 0}$ follows the law $Q$ of the exact reverse process \eqref{eq:reverseSDE}. Since $\nabla \log q_{T-t}(\x)$ is smooth, we may apply It\^o's lemma to get the following result, proved in Appendix \ref{app:proofssection}.

\begin{lemma}
\label{lem:gradlogSDE}
If $(Y_t)_{t \in [0,T]}$ is the solution to the SDE \eqref{eq:reverseSDE}, then for all $t \in [0,T)$ we have
\begin{equation*}
    \rmd (\nabla \log q_{T-t}(Y_t)) = - \nabla \log q_{T-t}(Y_t) \rmd t + \sqrt{2} \nabla^2 \log q_{T-t}(Y_t) \cdot \rmd B_t'.
\end{equation*}
\end{lemma}
From Lemma \ref{lem:gradlogSDE} and the product rule, we have $\rmd (e^t \nabla \log q_{T-t}(Y_t)) = \sqrt{2} e^t \nabla^2 \log q_{T-t}(Y_t) \cdot \rmd B_t'$. Since, by \eqref{eq:frobeniusnormbound} below, $\int_s^t e^{2r} \E[Q]{\|\nabla^2 \log q_{T-r}(Y_r)\|_F^2} \rmd r < \infty$ for $0 \leq s \leq t < T$, where we denote by $\| A \|_F = \Tr(A^T A)^{1/2}$ the Frobenius norm of a matrix $A$, the RHS is a square-integrable martingale. So, we may apply It\^o's isometry \citep[Equation 5.8]{legall2016brownian} and differentiate to get
\begin{equation}
\label{eq:expectationderivative1}
    \frac{\rmd}{\rmd t} \E[Q]{\|e^t \nabla \log q_{T-t}(Y_t) - e^s \nabla \log q_{T-s}(Y_s) \|^2} = 2 e^{2t} \E[Q]{\| \nabla^2 \log q_{T-t}(Y_t) \|^2_F}.
\end{equation}
In addition, for $0 \leq s \leq t < T$ where $s$ is considered fixed and $t$ may vary, from Lemma \ref{lem:gradlogSDE} we have
\begin{multline*}
    \rmd (\nabla \log q_{T-s}(Y_s) \cdot \nabla \log q_{T-t}(Y_t)) = - \nabla \log q_{T-s}(Y_s) \cdot \nabla \log q_{T-t}(Y_t) \rmd t \\ + \sqrt{2} \nabla \log q_{T-s}(Y_s) \cdot \nabla^2 \log q_{T-t}(Y_t) \cdot \rmd B_t'.
\end{multline*}
Since $\int_s^t \E[Q]{\|\nabla^2 \log q_{T-r}(Y_r)\|_F^2} \rmd r < \infty$ for $0 \leq s \leq t < T$, the final term is a square-integrable martingale. Integrating and taking expectations, we get
\begin{equation}
\label{eq:expectationderivative2}
    \frac{\rmd}{\rmd t} \E[Q]{\nabla \log q_{T-s}(Y_s) \cdot \nabla \log q_{T-t}(Y_t)} = - \E[Q]{\nabla \log q_{T-s}(Y_s) \cdot \nabla \log q_{T-t}(Y_t)},
\end{equation}
where again we may interchange integration and expectation using Fubini. Combining \eqref{eq:expectationderivative1} and \eqref{eq:expectationderivative2}, we deduce the following differential inequality for $E_{s,t}$. (See Appendix \ref{app:proofssection} for full derivation.)
\begin{lemma}
\label{lem:diffinequality}
For all $0 \leq s \leq t < T$, we have
\begin{multline}
\label{eq:lemdiffinequality}
    \frac{\rmd E_{s,t}}{\rmd t} = 2 \E[Q]{ \|\nabla^2 \log q_{T-t} (Y_t) \|_F^2 } - 2 \E[Q]{ \| \nabla \log q_{T-t}(Y_t) - \nabla \log q_{T-s}(Y_s) \|^2 } \\ + 2 \E[Q]{\{\nabla \log q_{T-s}(Y_s) - \nabla \log q_{T-t}(Y_t)\} \cdot \nabla \log q_{T-s} (Y_s)}.
\end{multline}
\end{lemma}

To further bound the RHS of \eqref{eq:lemdiffinequality}, note that by Young's inequality
\begin{multline*}
    \E[Q]{\{\nabla \log q_{T-s}(Y_s) - \nabla \log q_{T-t}(Y_t)\} \cdot \nabla \log q_{T-s} (Y_s)} \\ \leq \frac{1}{2} \Big\{ \E[Q]{ \| \nabla \log q_{T-t}(Y_t) - \nabla \log q_{T-s}(Y_s) \|^2 } + \E[Q]{ \| \nabla \log q_{T-s}(Y_s) \|^2 } \Big\}.
\end{multline*}
Therefore,
\begin{equation}
\label{eq:diffineq}
    \frac{\rmd E_{s,t}}{\rmd t} \leq \E[Q]{ \| \nabla \log q_{T-s}(Y_s) \|^2} + 2 \E[Q]{ \|\nabla^2 \log q_{T-t} (Y_t) \|_F^2 }.
\end{equation}
We must thus bound $\E[Q]{ \| \nabla \log q_{T-s}(Y_s) \|^2}$ and $\E[Q]{ \|\nabla^2 \log q_{T-t} (Y_t) \|_F^2 }$. We make use of the following lemma, which is found in previous work on diffusion models (see e.g. \citet{debortoli2022convergence,lee2023convergence, benton2023error}) and which we prove for completeness in Appendix \ref{app:proofssection}.

\begin{lemma}
\label{lem:gradientformulae}
For all $t > 0$, we have $\nabla \log q_t(\x_t) = - \sigma_t^{-2} \x_t + e^{-t} \sigma_t^{-2} \m_t$ and $\nabla^2 \log q_t(\x_t) = - \sigma_t^{-2} I + e^{-2t} \sigma_t^{-4} \Sig_t$.
\end{lemma}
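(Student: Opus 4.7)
The proof is essentially a calculation using Tweedie's formula and its second-order analogue. The plan is to express $q_t$ as a mixture of Gaussian conditionals $q_{t|0}(\x_t|\x_0) = \N(\x_t; e^{-t}\x_0, \sigma_t^2 I)$ against $p_\data$, then differentiate under the integral and recognize the conditional expectation and conditional covariance.

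First, I would prove the score formula. Writing $q_t(\x_t) = \int q_{t|0}(\x_t | \x_0)\, p_\data(\x_0)\, \d\x_0$ and computing the gradient of a Gaussian density gives
\begin{equation*}
    \nabla_{\x_t} q_{t|0}(\x_t | \x_0) = -\sigma_t^{-2}(\x_t - e^{-t}\x_0)\, q_{t|0}(\x_t | \x_0).
\end{equation*}
Dividing $\nabla q_t(\x_t) = \int \nabla q_{t|0}(\x_t|\x_0) p_\data(\x_0)\d\x_0$ by $q_t(\x_t)$ and using Bayes' rule $q_{0|t}(\x_0|\x_t) = q_{t|0}(\x_t|\x_0)p_\data(\x_0)/q_t(\x_t)$ immediately yields
\begin{equation*}
    \nabla \log q_t(\x_t) = -\sigma_t^{-2}\x_t + e^{-t}\sigma_t^{-2}\E[q_{0|t}(\cdot|\x_t)]{\x_0} = -\sigma_t^{-2}\x_t + e^{-t}\sigma_t^{-2}\m_t(\x_t).
\end{equation*}

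For the Hessian I would simply differentiate the above identity again, giving $\nabla^2 \log q_t(\x_t) = -\sigma_t^{-2} I + e^{-t}\sigma_t^{-2}\nabla_{\x_t}\m_t(\x_t)$, and then compute the Jacobian of $\m_t$ in coordinates. Writing $m_t^i(\x_t) = \int x_0^i\, q_{t|0}(\x_t|\x_0) p_\data(\x_0)\,\d\x_0 \,/\, q_t(\x_t)$ and applying the quotient rule,
\begin{equation*}
    \partial_j m_t^i(\x_t) = \frac{\int x_0^i\, \partial_j q_{t|0}(\x_t|\x_0) p_\data(\x_0)\,\d\x_0}{q_t(\x_t)} - m_t^i(\x_t)\, \partial_j \log q_t(\x_t).
\end{equation*}
Substituting the explicit form of $\partial_j q_{t|0}$ from the first step, the numerator becomes $-\sigma_t^{-2} x_t^j m_t^i + e^{-t}\sigma_t^{-2}\E{x_0^i x_0^j | \x_t}$, and substituting the formula for $\partial_j \log q_t$ derived above, the $-\sigma_t^{-2}x_t^j m_t^i$ pieces cancel and I am left with $\partial_j m_t^i = e^{-t}\sigma_t^{-2}(\E{x_0^i x_0^j|\x_t} - m_t^i m_t^j) = e^{-t}\sigma_t^{-2}(\Sig_t)_{ij}$. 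Multiplying by the prefactor $e^{-t}\sigma_t^{-2}$ gives the claimed $\nabla^2 \log q_t(\x_t) = -\sigma_t^{-2}I + e^{-2t}\sigma_t^{-4}\Sig_t(\x_t)$.

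There is no real obstacle here — the only care required is in justifying differentiation under the integral sign, which is standard given that $q_{t|0}$ is a smooth Gaussian density with all moments and $p_\data$ has finite second moment (Assumption \ref{ass:secondmoment}); one can dominate by appealing to the exponential decay of the Gaussian tails uniformly in $\x_t$ on compact sets. Both identities are classical (the first is Tweedie's formula, the second sometimes called second-order Tweedie), so the whole proof should occupy at most a few lines in the appendix.
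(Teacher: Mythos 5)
Your proof is correct, and for part (i) it is exactly the argument given in the paper. For part (ii) you take a mildly different route: you differentiate the Tweedie identity of part (i) once more and compute the Jacobian of the posterior mean by the quotient rule, arriving at the intermediate identity $\nabla_{\x_t}\m_t(\x_t) = e^{-t}\sigma_t^{-2}\Sig_t(\x_t)$ and then multiplying through. The paper instead expands $\nabla^2\log q_t$ directly via the standard second-derivative-of-a-log-mixture decomposition, identifying the result as $-\sigma_t^{-2}I + \sigma_t^{-4}\,{\cov}_{q_{0|t}}(\x_t - e^{-t}\x_0)$. The two are the same calculation rearranged; your version has the small benefit of isolating the reusable fact that the denoiser Jacobian is $e^{-t}\sigma_t^{-2}\Sig_t$, while the paper's version makes the variance-of-conditional-score structure manifest in a single display. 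Your closing remark about justifying differentiation under the integral sign is the right thing to flag, and applies equally to both arguments.
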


We may use Lemma \ref{lem:gradientformulae} to rewrite $\|\nabla \log q_{T-s}(Y_s)\|^2$ and $\|\nabla^2 \log q_{T-t} (Y_t)\|^2_F$ in terms of $Y_t$, $\m_t$ and $\Sig_t$. Expanding out the resulting expressions, the first can be bounded using elementary properties of the OU process, while the second can be bounded using properties of the OU process plus our key Lemma \ref{lem:keySLresult}. We obtain the following inequalities (see Appendix \ref{app:proofssection} for derivations).

\begin{lemma}
\label{lem:expectationbounds}
If $(Y_t)_{t \in [0,T]}$ is the solution to the reverse SDE \eqref{eq:reverseSDE}, then for all $t, s \in [0,T)$ we have
\begin{equation}
\label{eq:normgradientbound}
    \E[Q]{ \| \nabla \log q_{T-s}(Y_s) \|^2} \leq d \sigma_{T-s}^{-2},
\end{equation}
and
\begin{align}
\label{eq:frobeniusnormbound}
    \E[Q]{ \|\nabla^2 \log q_{T-t} (Y_t) \|_F^2 } 
    & \leq d \sigma_{T-t}^{-4} - \frac{1}{2} \frac{\rmd}{\rmd r} \big( \sigma_{T-r}^{-4} \E{\Tr(\Sig_{T-r})} \big) |_{r = t}.
\end{align}
\end{lemma}

Putting the bounds in \eqref{eq:normgradientbound} and \eqref{eq:frobeniusnormbound} together, we get
\begin{multline*}
    \E[Q]{ \| \nabla \log q_{T-s}(Y_s) \|^2} + 2 \E[Q]{ \|\nabla^2 \log q_{T-t} (Y_t) \|_F^2 } \\ \leq d \sigma_{T-s}^{-2} + 2 d \sigma_{T-t}^{-4} - \frac{\rmd}{\rmd r} \big( \sigma_{T-r}^{-4} \E{\Tr(\Sig_{T-r})} \big) |_{r = t}.
\end{multline*}
Let us denote the two parts of this error term by
\begin{equation*}
    E_{s,t}^{(1)} := d \sigma_{T-s}^{-2} + 2 d \sigma_{T-t}^{-4}, \quad\quad E_{s,t}^{(2)} := - \frac{\rmd}{\rmd r} \big( \sigma_{T-r}^{-4} \E{\Tr(\Sig_{T-r})} \big) |_{r = t}.
\end{equation*}
From \eqref{eq:diffineq}, it follows that
\begin{equation}
\label{eq:Estbound}
    E_{t_k, t} \leq \int_{t_k}^t \E[Q]{ \| \nabla \log q_{T-t_k}(Y_{t_k}) \|^2} + 2 \E[Q]{ \|\nabla^2 \log q_{T-s} (Y_s) \|_F^2 } \rmd s \leq \int_{t_k}^t E^{(1)}_{t_k,s} + E^{(2)}_{t_k,s} \rmd s.
\end{equation}

We now bound the contributions to $E_{t_k,t}$ from $E_{t_k,s}^{(1)}$ and $E^{(2)}_{t_k,s}$. It will be convenient to divide our time period into intervals $[0,T-1]$ and $[T-1, T-\delta]$ and treat these separately. We therefore assume there is an index $M$ with $t_M = T-1$. This assumption is purely for presentation clarity and our argument works similarly without it. Then, the following lemma bounds the various contributions to $E_{t_k,t}$. The proof of Lemma \ref{lem:E12bounds} is elementary but technical, so we defer it to Appendix \ref{app:proofssection}.

\begin{lemma}
\label{lem:E12bounds}
The error terms $E^{(1)}_{t_k, s}$ satisfy
\begin{equation}
\label{eq:E1}
    \sum_{k=0}^{M-1} \int_{t_k}^{t_{k+1}} \lr{ \int_{t_k}^t E^{(1)}_{t_k, s} \rmd s } \rmd t \lesssim \kappa d T, \quad \quad \sum_{k=M}^{N-1} \int_{t_k}^{t_{k+1}} \lr{ \int_{t_k}^t E^{(1)}_{t_k, s} \rmd s } \rmd t \lesssim \kappa^2 d N,
\end{equation}
and the error terms $E^{(2)}_{t_k, s}$ satisfy
\begin{equation}
\label{eq:E2}
    \sum_{k=0}^{N-1} \int_{t_k}^{t_{k+1}} \lr{ \int_{t_k}^t E^{(2)}_{t_k, s} \rmd s } \rmd t \lesssim \kappa d + \kappa^2 d N.
\end{equation}
\end{lemma}
Finally, by combining \eqref{eq:Estbound} with \eqref{eq:E1} and \eqref{eq:E2} we complete the proof of Lemma \ref{lem:discretizationerror}.
\end{proof}

\subsection{Bounding the KL Distance Between Path Measures}

\begin{lemma}[Bound on distance between path measures]
\label{lem:pathmeasures}
If $Q$ and $P^{q_T}$ are the true and approximate path measures respectively then $\KL{Q}{P^{q_T}} \lesssim \varepsilon_{\textup{score}}^2 + \kappa^2 d N + \kappa d T$.
\end{lemma}

\begin{proof}
We use the following result from \citet{chen2023sampling} (proof recalled in Appendix \ref{app:Girsanov}).
\begin{proposition}[Section 5.2 of \citet{chen2023sampling}]
\label{prop:Girsanov}
Let $Q$ and $P^{q_T}$ be the path measures of the solutions to \eqref{eq:reverseSDE} and \eqref{eq:exponentialintegrator} respectively, both started in $Y_0 \sim q_T$ and run from $t=0$ to $t=t_N$. Assume that
\begin{equation*}
    \sum_{k=0}^{N-1} \int_{t_k}^{t_{k+1}} \E[Q]{\|\nabla \log q_{T-t}(Y_t) - s_\theta(Y_{t_k}, T-t_k)\|^2} \rmd t < \infty.
\end{equation*}
Then, we have
\begin{equation*}
    \KL{Q}{P^{q_T}} \leq \sum_{k=0}^{N-1} \int_{t_k}^{t_{k+1}} \E[Q]{\|\nabla \log q_{T-t}(Y_t) - s_\theta(Y_{t_k}, T-t_k)\|^2} \rmd t.
\end{equation*}
\end{proposition}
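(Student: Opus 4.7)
My plan is to derive the bound directly from Girsanov's theorem, using a localization argument to sidestep the lack of a global Novikov-type integrability condition. Under $Q$ the process $(Y_t)_{t \in [0, t_N]}$ solves \eqref{eq:reverseSDE} with drift $Y_t + 2 \nabla \log q_{T-t}(Y_t)$, while under $P^{q_T}$ it solves \eqref{eq:exponentialintegrator} with piecewise drift $Y_t + 2 s_\theta(Y_{t_k}, T-t_k)$ on $[t_k, t_{k+1}]$. Both measures share the initial law $q_T$ and diffusion coefficient $\sqrt{2}$, so the entire discrepancy is encoded in
\[
    b_t := \sqrt{2}\,\{\nabla \log q_{T-t}(Y_t) - s_\theta(Y_{t_k}, T-t_k)\}, \qquad t \in [t_k, t_{k+1}).
\]

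Next I would write the Radon--Nikodym derivative formally as
\[
    \frac{\d Q}{\d P^{q_T}} = \exp{\int_0^{t_N} b_t \cdot \d W_t - \tfrac{1}{2} \int_0^{t_N} \|b_t\|^2 \d t},
\]
where $(W_t)$ is a $P^{q_T}$-Brownian motion, and unfold the KL divergence. Under $Q$ the process $W$ acquires drift $b_t/\sqrt{2}$, so the stochastic-integral contribution becomes a $Q$-martingale plus a compensator $\int_0^{t_N}\|b_t\|^2 \d t$; taking expectations, the two drift-squared terms combine to give $\KL{Q}{P^{q_T}} = \tfrac{1}{2} \E[Q]{\int_0^{t_N} \|b_t\|^2 \d t}$. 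Substituting $\|b_t\|^2 = 2\|\nabla \log q_{T-t}(Y_t) - s_\theta(Y_{t_k}, T-t_k)\|^2$ recovers the claimed bound (in fact, with equality).

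To justify the Girsanov step, I would introduce the localizing stopping times $\tau_R := \inf\{t \geq 0 : \|Y_t\| \geq R\} \wedge t_N$ and apply Girsanov to the stopped processes. On $[0, \tau_R]$ the trajectory is bounded, and since $T - t \geq T - t_N \geq \delta > 0$ throughout, $\nabla \log q_{T-t}$ is smooth and thus bounded on the relevant set, so Novikov's condition is trivially satisfied and the KL identity holds for the laws restricted to $\F_{\tau_R}$. Letting $R \to \infty$, almost-sure non-explosion of $Y$ under $Q$ (which holds since $q_T$ has finite second moment and the drift is linear-plus-bounded-on-compacts) yields $\tau_R \uparrow t_N$; monotone convergence controls the right-hand side using the integrability assumption in the proposition, and a Fatou-style argument on the Radon--Nikodym densities controls the left-hand side.

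The main obstacle is this last passage to the limit on the KL side, since the limit of stopped KLs only dominates the full KL from below a priori, via lower semicontinuity of KL under restriction to an increasing filtration. Fortunately, that is the direction we need for an inequality. This technical manoeuvre is exactly what is carried out in Section 5.2 of \cite{chen2023sampling}; since our early-stopped horizon keeps $\nabla \log q_{T-t}$ uniformly smooth and bounded away from the $t=T$ singularity, no new analytical difficulty arises, and invoking their argument in our setting suffices.
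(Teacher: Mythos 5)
Your overall strategy is the one the paper uses: write the Radon--Nikodym derivative between $Q$ and $P^{q_T}$ via Girsanov, reduce the formal computation of $\KL{Q}{P^{q_T}}$ to $\tfrac12 \E[Q]{\int_0^{t_N}\|b_t\|^2\d t}$, and deal with the lack of Novikov's condition by localizing and passing to the limit. The differences are in the technical details, and a couple of them matter.

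\emph{Choice of localizing times.} You use the exit times $\tau_R = \inf\{t : \|Y_t\|\geq R\}\wedge t_N$ and claim Novikov holds trivially on $[0,\tau_R]$. For this you need $b_t$ to be bounded on $\{\|Y_t\|\leq R\}$, which requires $s_\theta(\cdot, T-t_k)$ to be locally bounded. The paper does not assume this: it instead notes that the $L^2$ bound on $b$ makes $\cal{E}(\L)$ a continuous local martingale, and takes the stopping times $T_n$ that reduce this local martingale. That avoids any regularity assumption on $s_\theta$ beyond Assumption~\ref{ass:scoreapproximation}. Your version works for a neural-network $s_\theta$ but is slightly less general.

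\emph{The passage to the limit.} This is where your proposal is genuinely confused. You write that ``the limit of stopped KLs only dominates the full KL from below a priori \ldots Fortunately, that is the direction we need for an inequality.'' It is the opposite: a priori, restriction to a sub-$\sigma$-algebra \emph{decreases} KL, so $\KL{Q|_{\F_{\tau_R}}}{P^{q_T}|_{\F_{\tau_R}}} \leq \KL{Q}{P^{q_T}}$ for each $R$. You have shown that the left-hand side is bounded by the target integral, but that only gives a \emph{lower} bound on what you need, not an upper bound. The non-trivial step is precisely the reverse inequality $\KL{Q}{P^{q_T}} \leq \lim_R \KL{Q|_{\F_{\tau_R}}}{P^{q_T}|_{\F_{\tau_R}}}$, which is not ``Fatou-style'' but requires a genuine argument. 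There are two standard routes: (i) monotone convergence of KL over an increasing filtration $\F_{\tau_R} \uparrow \F_{t_N}$ (valid here since non-explosion gives $\tau_R \uparrow t_N$ a.s. and the process is continuous), or (ii) the paper's route of coupling, weak convergence of the localized measures, lower semicontinuity of KL under weak convergence, and the data-processing inequality, applied after truncating the time window by $\pi_\varepsilon$ and then sending $\varepsilon \to 0$. You gesture at both without committing to either, and your stated direction of the a priori inequality is wrong. Since you ultimately defer to Section~5.2 of \cite{chen2023sampling} for this step, the citation saves you, but as written your own sketch would not close.

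\emph{Equality vs.\ inequality.} You write that the claimed bound holds ``in fact, with equality.'' The localization argument only yields $\leq$: each stopped KL equals $\tfrac12\E[Q]{\int_0^{\tau_R}\|b_s\|^2\d s}$, and passing to the limit preserves only the inequality. Equality would require a global Novikov-type justification that you explicitly acknowledge you do not have. Drop that parenthetical.

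In summary: same method and same source as the paper, but your localization requires an extra regularity hypothesis on $s_\theta$, your stated reason for the limit passage is the wrong direction, and your equality claim is unjustified. The core idea is right; the details need to be tightened.
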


To apply Proposition \ref{prop:Girsanov}, we note that by Lemma \ref{lem:discretizationerror} and Assumption \ref{ass:scoreapproximation} we have
\begin{align*}
    & \hspace{-0mm}\sum_{k=0}^{N-1} \int_{t_k}^{t_{k+1}} \E[Q]{\|\nabla \log q_{T-t}(Y_t) - s_\theta(Y_{t_k}, T-t_k)\|^2} \rmd t \\
    & \hspace{20mm} \lesssim \sum_{k=0}^{N-1} \gamma_k \E[Q]{\|\nabla \log q_{T-t_k}(Y_{t_k}) - s_\theta(Y_{t_k}, T-t_k)\|^2} \\
    & \hspace{40mm} + \sum_{k=0}^{N-1} \int_{t_k}^{t_{k+1}} \E[Q]{\|\nabla \log q_{T-t}(Y_t) - \nabla \log q_{T-t_k}(Y_{t_k})\|^2} \rmd t \\
    & \hspace{20mm} \lesssim \varepsilon_{\textup{score}}^2 + \kappa^2 d N + \kappa d T < \infty.
\end{align*}
Therefore, the conditions of Proposition \ref{prop:Girsanov} hold and Lemma \ref{lem:pathmeasures} follows by applying Proposition \ref{prop:Girsanov}.
\end{proof}

\subsection{Completing the Proof}

Finally, we show how to complete the proof of Theorem \ref{thm:KLearlystoppedbound} from Lemma \ref{lem:pathmeasures}. As a corollary of Lemma \ref{lem:pathmeasures}, we see that $Q$ is absolutely continuous with respect to $P^{q_T}$. Since $P^{q_T}$ and $P^{\pi_d}$ differ only by a change of starting distribution, we can write $\frac{\rmd P^{q_T}}{\rmd P^{\pi_d}}(\y) = \frac{\rmd q_T}{\rmd \pi_d}(\y_0)$ for a path $\y = (\y_t)_{t \in [0,t_N]}$, and deduce that $P^{q_T}$ and $P^{\pi_d}$ are mutually absolutely continuous. It follows that $Q$ is absolutely continuous with respect to $P^{\pi_d}$ and $\frac{\rmd Q}{\rmd P^{\pi_d}}(\y) = \frac{\rmd Q}{\rmd P^{q_T}}(\y) \frac{\rmd P^{q_T}}{\rmd P^{\pi_d}}(\y) = \frac{\rmd Q}{\rmd P^{q_T}}(\y) \frac{\rmd q_T}{\rmd \pi_d}(\y_0)$.
Therefore,
\begin{equation}
\label{eq:KLchainrule}
    \KL{Q}{P^{\pi_d}} = \E[Q]{\log \lr{\frac{\rmd Q}{\rmd P^{q_T}}(Y) \frac{\rmd q_T}{\rmd \pi_d}(Y_0)}} = \KL{Q}{P^{q_T}} + \KL{q_T}{\pi_d}
\end{equation}

The first term is bounded by Lemma \ref{lem:pathmeasures}. The second is controlled by the convergence of the forward process in KL divergence and can be bounded using the following proposition (which is very similar to Lemma 9 in \citet{chen2023improved}). We defer the proof of Proposition \ref{prop:convergenceforward} to Appendix \ref{app:convergenceforward}.

\begin{proposition}
\label{prop:convergenceforward}
Under Assumption \ref{ass:secondmoment}, we have $\KL{q_T}{\pi_d} \lesssim d e^{-2T}$ for $T \geq 1$.
\end{proposition}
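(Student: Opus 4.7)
The plan is to exploit the Gaussian structure of the OU transition density and bypass any smoothness issues via convexity. Since $q_T(\cdot) = \int q_{T|0}(\cdot \mid \x_0) \, p_\data(\d \x_0)$ is a mixture of the Gaussians $q_{T|0}(\cdot \mid \x_0) = \N(\x_0 e^{-T}, \sigma_T^2 I_d)$, convexity of KL divergence in its first argument (Jensen) gives
\[\KL{q_T}{\pi_d} \leq \E[\x_0 \sim p_\data]{\KL{q_{T|0}(\cdot \mid \x_0)}{\pi_d}}.\]
This reduces the problem to the KL divergence between a single Gaussian $\N(\x_0 e^{-T}, \sigma_T^2 I_d)$ and $\pi_d = \N(0, I_d)$, and crucially sidesteps the fact that $p_\data$ itself may fail to admit a density with respect to $\pi_d$.

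Next, I would plug into the closed-form formula for KL between Gaussians to obtain
\begin{align*}
2 \KL{q_{T|0}(\cdot \mid \x_0)}{\pi_d} & = d \sigma_T^2 - d - d \log \sigma_T^2 + e^{-2T} \|\x_0\|^2 \\
& = -d e^{-2T} - d \log(1 - e^{-2T}) + e^{-2T} \|\x_0\|^2.
\end{align*}
For $T \geq 1$ we have $e^{-2T} \leq e^{-2} < 1/2$, so the elementary estimate $-\log(1-x) \leq 2x$ on $[0, 1/2]$ controls the log term and shows that the deterministic contribution is $\lesssim d e^{-2T}$. Taking the expectation over $\x_0 \sim p_\data$, Assumption \ref{ass:secondmoment} (together with the standard centering convention) gives $\E[\x_0 \sim p_\data]{\|\x_0\|^2} = O(d)$, so the stochastic term also contributes $O(d e^{-2T})$, yielding $\KL{q_T}{\pi_d} \lesssim d e^{-2T}$.

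There is no substantial obstacle here: the whole argument reduces to a standard Gaussian KL computation combined with convexity. The only minor care needed is in bounding the logarithm, which is why the hypothesis $T \geq 1$ appears (it keeps $e^{-2T}$ bounded away from $1$), and in observing that $\cov(p_\data) = I_d$ with finite second moments implies $\E{\|\x_0\|^2}$ scales linearly in $d$.
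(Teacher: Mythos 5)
Your proposal is correct and follows essentially the same route as the paper: bound $\KL{q_T}{\pi_d}$ via convexity of KL in its first argument, plug in the closed-form Gaussian KL, use $\E{\|\x_0\|^2} = d$ from the normalization, and control $-\log(1-e^{-2T})$ by $O(e^{-2T})$ using $T \geq 1$. There is no material difference between your argument and the paper's.
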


Since $q_\delta$ and $p_{t_N}$ are the pushfowards of the path measures $Q$ and $P^{\pi_d}$ under $f : (\omega_t)_{t \in [0,t_N]} \mapsto \omega_{t_N}$, the data processing inequality implies that $\KL{q_{\delta}}{p_{t_N}} \leq \KL{Q}{P^{\pi_d}}$. Finally, combining this with \eqref{eq:KLchainrule}, Lemma \ref{lem:pathmeasures}, and Proposition \ref{prop:convergenceforward} completes the proof of Theorem \ref{thm:KLearlystoppedbound}.

\section{Discussion}

Inspecting our bound in Theorem \ref{thm:KLearlystoppedbound}, the error from approximating $q_T$ by $\pi_d$ decays exponentially in $T$ and so is typically negligible. If the $L^2$ error of our score approximation is $\varepsilon_\textup{score}^2$, then we cannot hope to improve on the term of order $\varepsilon_\textup{score}^2$ for the KL error due to using an approximate score. It remains to consider how tight the term corresponding to the discretization of the reverse process is.

Our proof shows that under our assumptions, with perfect score approximation and initializing the reverse SDE in $q_T$, the KL error induced by discretizing time is of order $\kappa^2 d N + \kappa d T$. As explained in Section \ref{sec:mainresults}, we can think of this as being $\tilde O(d / N)$, assuming a suitable choice of time steps and $\kappa$, or equivalently $\tilde O(d \eta)$ where $\eta$ is the average step size. We note that the linear dependence on $d$ (up to logarithmic factors) here is optimal (for justification, see Appendix \ref{app:linearoptimal}).

However, it is unclear whether the linear dependence on $\eta$ is optimal here. On the one hand, the KL divergence between the true and approximate reverse path measures is $\Theta(d \eta)$ in the worst case (consider the case where $p_\data$ is a point mass, or see Theorem 7 in \citet{chen2023sampling} in the critically damped Langevin setting). Thus, our Girsanov-based method cannot improve upon the rate of $\tilde O(d \eta)$ without significant modification. In addition, the best known convergence rates in KL divergence for Langevin Monte Carlo (LMC) under various functional inequalities are $\tilde O(d \eta)$ \citep{cheng2018convergence, vempala2019rapid, chewi2022analysis, yang2022convergence}.

On the other hand, the increasing noise schedule of diffusion models may allow for improved convergence rates compared to LMC. As evidence, \citet{mou2022improved} show that the EM discretization of an SDE has error $O(\eta^2)$ in reverse KL divergence, under smoothness assumptions which should be satisfied under early stopping, making only mild additional assumptions on the data distribution such as bounded support. We could apply their results to get $O(\eta^2)$ convergence bounds for the diffusion model, but the implicit constant would depend on $d$ and on the smoothness parameters, which in turn depend polynomially on $d$ and $1/\delta$. This leads to bounds which are quadratic in $\eta$ but superlinear in $d$. We expect such bounds to be tighter than our own when $\eta^{-1} \gg \textup{poly}(d)$.

We may also get better convergence bounds by working in weaker metrics. Under some smoothness assumptions on $p_{\data}$, our KL error bounds imply a bound of $\tilde O(\sqrt{\eta})$ in Wasserstein-2 distance via a Talagrand inequality \citep{otto200generalization}. However, there is evidence that this rate is suboptimal. Under smoothness assumptions on the drift, the EM discretization has error $\tilde O(\eta)$ in Wasserstein-$\rho$ metric for $\rho \geq 1$ \citep{alfonsi2015optimal}. Under smoothness and convexity assumptions on the data distribution, LMC converges at rate $O(\eta)$ \citep{durmus2019high, li2022sqrt}. However, these results require additional smoothness assumptions and have superlinear dependence on $d$.

Ultimately, there appears to be a trade off between the dependence on the data dimension and the step size. Our key to obtaining bounds which are tight in $d$ was to use bounds on the drift coefficient of the reverse SDE which hold in expectation. All methods of which we are aware that achieve a better dependence on $\eta$ require stronger (e.g. $L^\infty$) control on the drift. These necessitate worse dimension dependence and additional smoothness assumptions. We leave bridging the gap between these two strands of proofs to future work.

\subsubsection*{Acknowledgments}

We thank Tyler Farghly for pointing out a small gap in the original version of this work. Joe Benton was supported by the EPSRC through the StatML CDT (EP/S023151/1). Arnaud Doucet acknowledges support from EPSRC grants EP/R034710/1 and EP/R018561/1.

\bibliography{bibliography}

\begin{thebibliography}{65}
\providecommand{\natexlab}[1]{#1}
\providecommand{\url}[1]{\texttt{#1}}
\expandafter\ifx\csname urlstyle\endcsname\relax
  \providecommand{\doi}[1]{doi: #1}\else
  \providecommand{\doi}{doi: \begingroup \urlstyle{rm}\Url}\fi

\bibitem[Alaoui \& Montanari(2022)Alaoui and Montanari]{alaoui2022information}
Ahmed~El Alaoui and Andrea Montanari.
\newblock {An Information-Theoretic View of Stochastic Localization}.
\newblock \emph{IEEE Transactions on Information Theory}, 68\penalty0
  (11):\penalty0 7423--7426, 2022.

\bibitem[Albergo \& Vanden-Eijnden(2023)Albergo and
  Vanden-Eijnden]{albergo2023building}
Michael~S Albergo and Eric Vanden-Eijnden.
\newblock {Building Normalizing Flows with Stochastic Interpolants}.
\newblock In \emph{International Conference on Learning Representations}, 2023.

\bibitem[Albergo et~al.(2023)Albergo, Boffi, and
  Vanden-Eijnden]{albergo2023stochastic}
Michael~S Albergo, Nicholas~M Boffi, and Eric Vanden-Eijnden.
\newblock {Stochastic Interpolants: A Unifying Framework for Flows and
  Diffusions}.
\newblock \emph{arXiv preprint arXiv:2303.08797}, 2023.

\bibitem[Alfonsi et~al.(2015)Alfonsi, Jourdain, and
  Kohatsu-Higa]{alfonsi2015optimal}
Aur{\'{e}}lien Alfonsi, Benjamin Jourdain, and Arturo Kohatsu-Higa.
\newblock {Optimal transport bounds between the time-marginals of a
  multidimensional diffusion and its Euler scheme}.
\newblock \emph{Electronic Journal of Probability}, 20:\penalty0 1--31, 2015.

\bibitem[Ambrosio et~al.(2005)Ambrosio, Gigli, and
  Savar{\'{e}}]{ambrosio2005gradient}
Luigi Ambrosio, Nicola Gigli, and Giuseppe Savar{\'{e}}.
\newblock \emph{{Gradient Flows: in Metric Spaces and in the Space of
  Probability Measures}}.
\newblock Springer Science \& Business Media, 2005.

\bibitem[Anderson(1982)]{anderson1982reverse}
Brian D~O Anderson.
\newblock {Reverse-time diffusion equation models}.
\newblock \emph{Stochastic Processes and their Applications}, 12\penalty0
  (3):\penalty0 313--326, 1982.

\bibitem[Anttila et~al.(2003)Anttila, Ball, and
  Perissinaki]{anttila2003central}
Milla Anttila, Keith Ball, and Irini Perissinaki.
\newblock {The central limit problem for convex bodies}.
\newblock \emph{Transactions of the American Mathematical Society},
  355\penalty0 (12):\penalty0 4723--4735, 2003.

\bibitem[Austin et~al.(2021)Austin, Johnson, Ho, Tarlow, and {Van Den
  Berg}]{austin2021structured}
Jacob Austin, Daniel~D Johnson, Jonathan Ho, Daniel Tarlow, and Rianne {Van Den
  Berg}.
\newblock {Structured Denoising Diffusion Models in Discrete State-Spaces}.
\newblock In \emph{Advances in Neural Information Processing Systems}, 2021.

\bibitem[Bakry et~al.(2014)Bakry, Gentil, and Ledoux]{bakry2014analysis}
Dominique Bakry, Ivan Gentil, and Michel Ledoux.
\newblock \emph{{Analysis and Geometry of Markov Diffusion Operators}}.
\newblock Springer, 2014.

\bibitem[Benton et~al.(2023)Benton, Deligiannidis, and Doucet]{benton2023error}
Joe Benton, George Deligiannidis, and Arnaud Doucet.
\newblock {Error Bounds for Flow Matching Methods}.
\newblock \emph{arXiv preprint arXiv:2305.16860}, 2023.

\bibitem[Biroli \& M{\'e}zard(2023)Biroli and M{\'e}zard]{biroli2023generative}
Giulio Biroli and Marc M{\'e}zard.
\newblock Generative diffusion in very large dimensions.
\newblock \emph{arXiv preprint arXiv:2306.03518}, 2023.

\bibitem[Block et~al.(2022)Block, Mroueh, and Rakhlin]{block2022generative}
Adam Block, Youssef Mroueh, and Alexander Rakhlin.
\newblock {Generative Modeling with Denoising Auto-Encoders and Langevin
  Sampling}.
\newblock \emph{arXiv preprint arXiv:2002.00107}, 2022.

\bibitem[Bobkov \& Koldobsky(2003)Bobkov and Koldobsky]{bobkov2003central}
Sergey~G Bobkov and Alexander Koldobsky.
\newblock {On the Central Limit Property of Convex Bodies}.
\newblock In \emph{Geometric Aspects of Functional Analysis: Israel Seminar
  2001-2002}, pp.\  44--52. Springer, 2003.

\bibitem[Cattiaux et~al.(2022)Cattiaux, Conforti, Gentil, and
  L{\'{e}}onard]{cattiaux2022time}
Patrick Cattiaux, Giovanni Conforti, Ivan Gentil, and Christian L{\'{e}}onard.
\newblock {Time reversal of diffusion processes under a finite entropy
  condition}.
\newblock \emph{Annales de l'Institut Henri Poincar{\'{e}} (B)
  Probabilit{\'{e}}s et Statistiques}, 2022.

\bibitem[Chen et~al.(2023{\natexlab{a}})Chen, Lee, and Lu]{chen2023improved}
Hongrui Chen, Holden Lee, and Jianfeng Lu.
\newblock {Improved Analysis of Score-based Generative Modeling: User-Friendly
  Bounds under Minimal Smoothness Assumptions}.
\newblock In \emph{International Conference on Machine Learning},
  2023{\natexlab{a}}.

\bibitem[Chen et~al.(2023{\natexlab{b}})Chen, Huang, Zhao, and
  Wang]{chen2023score}
Minshuo Chen, Kaixuan Huang, Tuo Zhao, and Mengdi Wang.
\newblock {Score Approximation, Estimation and Distribution Recovery of
  Diffusion Models on Low-Dimensional Data}.
\newblock \emph{arXiv preprint arXiv:2302.07194}, 2023{\natexlab{b}}.

\bibitem[Chen et~al.(2023{\natexlab{c}})Chen, Chewi, Lee, Li, Lu, and
  Salim]{chen2023probability}
Sitan Chen, Sinho Chewi, Holden Lee, Yuanzhi Li, Jianfeng Lu, and Adil Salim.
\newblock {The probability flow ODE is provably fast}.
\newblock \emph{arXiv preprint arXiv:2305.11798}, 2023{\natexlab{c}}.

\bibitem[Chen et~al.(2023{\natexlab{d}})Chen, Chewi, Li, Li, Salim, and
  Zhang]{chen2023sampling}
Sitan Chen, Sinho Chewi, Jerry Li, Yuanzhi Li, Adil Salim, and Anru~R Zhang.
\newblock {Sampling is as easy as learning the score: theory for diffusion
  models with minimal data assumptions}.
\newblock In \emph{International Conference on Learning Representations},
  2023{\natexlab{d}}.

\bibitem[Chen et~al.(2023{\natexlab{e}})Chen, Daras, and
  Dimakis]{chen2023restoration}
Sitan Chen, Giannis Daras, and Alexandros~G Dimakis.
\newblock {Restoration-Degradation Beyond Linear Diffusions: A Non-Asymptotic
  Analysis For DDIM-Type Samplers}.
\newblock In \emph{International Conference on Machine Learning},
  2023{\natexlab{e}}.

\bibitem[Chen(2021)]{chen2021almost}
Yuansi Chen.
\newblock {An Almost Constant Lower Bound of the Isoperimetric Coefficient in
  the KLS Conjecture}.
\newblock \emph{Geometric and Functional Analysis}, 31:\penalty0 34--61, 2021.

\bibitem[Chen \& Eldan(2022)Chen and Eldan]{chen2022localization}
Yuansi Chen and Ronen Eldan.
\newblock {Localization Schemes: A Framework for Proving Mixing Bounds for
  Markov Chains}.
\newblock \emph{IEEE 63rd Annual Symposium on Foundations of Computer Science
  (FOCS)}, pp.\  110--122, 2022.

\bibitem[Cheng \& Bartlett(2018)Cheng and Bartlett]{cheng2018convergence}
Xiang Cheng and Peter Bartlett.
\newblock {Convergence of Langevin MCMC in KL-divergence}.
\newblock In \emph{Proceedings of Algorithmic Learning Theory}, volume~83, pp.\
   186--211, may 2018.

\bibitem[Chewi et~al.(2022)Chewi, Erdogdu, Li, Shen, and
  Zhang]{chewi2022analysis}
Sinho Chewi, Murat~A Erdogdu, Mufan~Bill Li, Ruoqi Shen, and Matthew Zhang.
\newblock {Analysis of Langevin Monte Carlo from Poincar{\'{e}} to
  Log-Sobolev}.
\newblock In \emph{Proceedings of Thirty Fifth Conference on Learning Theory},
  2022.

\bibitem[Conforti et~al.(2023)Conforti, Durmus, and Silveri]{conforti2023score}
Giovanni Conforti, Alain Durmus, and Marta~Gentiloni Silveri.
\newblock {Score diffusion models without early stopping: finite Fisher
  information is all you need}.
\newblock \emph{arXiv preprint arXiv:2308.12240}, aug 2023.

\bibitem[{De Bortoli}(2022)]{debortoli2022convergence}
Valentin {De Bortoli}.
\newblock {Convergence of denoising diffusion models under the manifold
  hypothesis}.
\newblock \emph{Transactions on Machine Learning Research}, 2022.

\bibitem[{De Bortoli} et~al.(2021){De Bortoli}, Thornton, Heng, and
  Doucet]{debortoli2021diffusion}
Valentin {De Bortoli}, James Thornton, Jeremy Heng, and Arnaud Doucet.
\newblock {Diffusion Schr{\"{o}}dinger Bridge with Applications to Score-Based
  Generative Modeling}.
\newblock In \emph{Advances in Neural Information Processing Systems}, 2021.

\bibitem[Dhariwal \& Nichol(2021)Dhariwal and Nichol]{dhariwal2021diffusion}
Prafulla Dhariwal and Alex Nichol.
\newblock {Diffusion Models Beat GANs on Image Synthesis}.
\newblock In \emph{Advances in Neural Information Processing Systems}, 2021.

\bibitem[Durmus \& Moulines(2019)Durmus and Moulines]{durmus2019high}
Alain Durmus and {\'{E}}ric Moulines.
\newblock {High-dimensional Bayesian inference via the Unadjusted Langevin
  Algorithm}.
\newblock \emph{Bernoulli}, 25\penalty0 (4A):\penalty0 2854--2882, 2019.

\bibitem[{El Alaoui} et~al.(2022){El Alaoui}, Montanari, and
  Sellke]{alaoui2022sampling}
Ahmed {El Alaoui}, Andrea Montanari, and Mark Sellke.
\newblock {Sampling from the Sherrington-Kirkpatrick Gibbs measure via
  algorithmic stochastic localization}.
\newblock \emph{IEEE 63rd Annual Symposium on Foundations of Computer Science
  (FOCS)}, pp.\  323--334, 2022.

\bibitem[Eldan(2013)]{eldan2013thin}
Ronen Eldan.
\newblock {Thin Shell Implies Spectral Gap Up to Polylog via a Stochastic
  Localization Scheme}.
\newblock \emph{Geometric and Functional Analysis}, 23\penalty0 (2):\penalty0
  532--569, 2013.

\bibitem[Eldan(2020)]{eldan2020taming}
Ronen Eldan.
\newblock {Taming correlations through entropy-efficient measure decompositions
  with applications to mean-field approximation}.
\newblock \emph{Probability Theory and Related Fields}, 176\penalty0
  (3-4):\penalty0 737--755, 2020.

\bibitem[Ghio et~al.(2023)Ghio, Dandi, Krzakala, and
  Zdeborov{\'a}]{ghio2023sampling}
Davide Ghio, Yatin Dandi, Florent Krzakala, and Lenka Zdeborov{\'a}.
\newblock Sampling with flows, diffusion and autoregressive neural networks: A
  spin-glass perspective.
\newblock \emph{arXiv preprint arXiv:2308.14085}, 2023.

\bibitem[Ho et~al.(2020)Ho, Jain, and Abbeel]{ho2020denoising}
Jonathan Ho, Ajay Jain, and Pieter Abbeel.
\newblock {Denoising Diffusion Probabilistic Models}.
\newblock In \emph{Advances in Neural Information Processing Systems}, 2020.

\bibitem[Hyv{\"{a}}rinen(2005)]{hyvarinen2005estimation}
Aapo Hyv{\"{a}}rinen.
\newblock {Estimation of Non-Normalized Statistical Models by Score Matching}.
\newblock \emph{Journal of Machine Learning Research}, 6\penalty0
  (24):\penalty0 695--709, 2005.

\bibitem[Kannan et~al.(1995)Kannan, Lov{\'{a}}sz, and
  Simonovits]{kannan1995isoperimetric}
Ravi Kannan, L{\'{a}}szl{\'{o}} Lov{\'{a}}sz, and Mikl{\'{o}}s Simonovits.
\newblock {Isoperimetric problems for convex bodies and a localization lemma}.
\newblock \emph{Discrete \& Computational Geometry}, 13:\penalty0 541--559,
  1995.

\bibitem[{Le Gall}(2016)]{legall2016brownian}
Jean-Fran{\c{c}}ois {Le Gall}.
\newblock \emph{{Brownian Motion, Martingales, and Stochastic Calculus}}.
\newblock Springer, 2016.

\bibitem[Lee et~al.(2022)Lee, Lu, and Tan]{lee2022convergence}
Holden Lee, Jianfeng Lu, and Yixin Tan.
\newblock {Convergence for score-based generative modeling with polynomial
  complexity}.
\newblock In \emph{Advances in Neural Information Processing Systems}, 2022.

\bibitem[Lee et~al.(2023)Lee, Lu, and Tan]{lee2023convergence}
Holden Lee, Jianfeng Lu, and Yixin Tan.
\newblock {Convergence of score-based generative modeling for general data
  distributions}.
\newblock In \emph{International Conference on Algorithmic Learning Theory},
  pp.\  946--985, 2023.

\bibitem[Lee \& Vempala(2017)Lee and Vempala]{lee2017eldan}
Yin~Tat Lee and Santosh~S Vempala.
\newblock {Eldan's Stochastic Localization and the KLS Hyperplane Conjecture:
  An Improved Lower Bound for Expansion}.
\newblock \emph{IEEE 58th Annual Symposium on Foundations of Computer Science
  (FOCS)}, pp.\  998--1007, 2017.

\bibitem[Li et~al.(2023)Li, Wei, Chen, and Chi]{li2023towards}
Gen Li, Yuting Wei, Yuxin Chen, and Yuejie Chi.
\newblock {Towards Faster Non-Asymptotic Convergence for Diffusion-Based
  Generative Models}.
\newblock \emph{arXiv preprint arXiv:2306.09251}, 2023.

\bibitem[Li et~al.(2022)Li, Zha, and Tao]{li2022sqrt}
Ruilin Li, Hongyuan Zha, and Molei Tao.
\newblock {Sqrt(d) Dimension Dependence of Langevin Monte Carlo}.
\newblock In \emph{International Conference on Learning Representations}, 2022.

\bibitem[Liptser \& Shiryaev(1977)Liptser and Shiryaev]{lipster1977statistics}
Robert~S Liptser and Albert~N Shiryaev.
\newblock \emph{{Statistics of Random Processes: General Theory}}.
\newblock Springer, 1977.

\bibitem[Liu et~al.(2022)Liu, Wu, Ye, and Liu]{liu2022let}
Xingchao Liu, Lemeng Wu, Mao Ye, and Qiang Liu.
\newblock {Let us Build Bridges: Understanding and Extending Diffusion
  Generative Models}.
\newblock \emph{arXiv preprint arXiv:2208.14699}, 2022.

\bibitem[Lov{\'{a}}sz \& Simonovits(1993)Lov{\'{a}}sz and
  Simonovits]{lovasz1993random}
L{\'{a}}szl{\'{o}} Lov{\'{a}}sz and Mikl{\'{o}}s Simonovits.
\newblock {Random walks in a convex body and an improved volume algorithm}.
\newblock \emph{Random Structures \& Algorithms}, 4\penalty0 (4):\penalty0
  359--412, 1993.

\bibitem[Montanari(2023)]{montanari2023sampling}
Andrea Montanari.
\newblock {Sampling, Diffusions, and Stochastic Localization}.
\newblock \emph{arXiv preprint arXiv:2305.10690}, 2023.

\bibitem[Montanari \& Wu(2023)Montanari and Wu]{montanari2023posterior}
Andrea Montanari and Yuchen Wu.
\newblock {Posterior Sampling from the Spiked Models via Diffusion Processes}.
\newblock \emph{arXiv preprint arXiv:2304.11449}, 2023.

\bibitem[Mou et~al.(2022)Mou, Flammarion, Wainwright, and
  Bartlett]{mou2022improved}
Wenlong Mou, Nicolas Flammarion, Martin~J Wainwright, and Peter~L Bartlett.
\newblock {Improved Bounds for Discretization of Langevin Diffusions:
  Near-Optimal Rates without Convexity}.
\newblock \emph{Bernoulli}, 28\penalty0 (3):\penalty0 1577--1601, 2022.

\bibitem[Oko et~al.(2023)Oko, Akiyama, and Suzuki]{oko2023diffusion}
Kazusato Oko, Shunta Akiyama, and Taiji Suzuki.
\newblock {Diffusion Models are Minimax Optimal Distribution Estimators}.
\newblock In \emph{ICLR 2023 Workshop on Understanding Foundation Models},
  2023.

\bibitem[Otto \& Villani(2000)Otto and Villani]{otto200generalization}
Felix Otto and C{\'{e}}dric Villani.
\newblock {Generalization of an Inequality by Talagrand and Links with the
  Logarithmic Sobolev Inequality}.
\newblock \emph{Journal of Functional Analysis}, 173\penalty0 (2):\penalty0
  361--400, 2000.

\bibitem[Pidstrigach(2022)]{pidstrigach2022score}
Jakiw Pidstrigach.
\newblock {Score-Based Generative Models Detect Manifolds}.
\newblock In \emph{Advances in Neural Information Processing Systems}, 2022.

\bibitem[Popov et~al.(2021)Popov, Vovk, Gogoryan, Sadekova, and
  Kudinov]{popov2021grad}
Vadim Popov, Ivan Vovk, Vladimir Gogoryan, Tasnima Sadekova, and Mikhail
  Kudinov.
\newblock {Grad-TTS: A Diffusion Probabilistic Model for Text-to-Speech}.
\newblock In \emph{International Conference on Machine Learning}, 2021.

\bibitem[Ramesh et~al.(2022)Ramesh, Dhariwal, Nichol, Chu, and
  Chen]{ramesh2022hierarchical}
Aditya Ramesh, Prafulla Dhariwal, Alex Nichol, Casey Chu, and Mark Chen.
\newblock {Hierarchical Text-Conditional Image Generation with CLIP Latents}.
\newblock \emph{arXiv preprint arXiv:2204.06125}, 2022.

\bibitem[Robbins(1956)]{robbins1956empirical}
Herbert Robbins.
\newblock {An Empirical Bayes Approach to Statistics}.
\newblock \emph{Proceedings of the Third Berkeley Symposium on Mathematical
  Statistics and Probability}, 1:\penalty0 157--164, 1956.

\bibitem[Saharia et~al.(2022)Saharia, Chan, Saxena, Li, Whang, Denton,
  Ghasemipour, Ayan, Mahdavi, Lopes, et~al.]{saharia2022photorealistic}
Chitwan Saharia, William Chan, Saurabh Saxena, Lala Li, Jay Whang, Emily
  Denton, Seyed Kamyar~Seyed Ghasemipour, Burcu~Karagol Ayan, S~Sara Mahdavi,
  Rapha~Gontijo Lopes, et~al.
\newblock {Photorealistic Text-to-Image Diffusion Models with Deep Language
  Understanding}.
\newblock In \emph{Advances in Neural Information Processing Systems}, 2022.

\bibitem[Sohl-Dickstein et~al.(2015)Sohl-Dickstein, Weiss, Maheswaranathan, and
  Ganguli]{sohldickstein2015deep}
Jascha Sohl-Dickstein, Eric~A Weiss, Niru Maheswaranathan, and Surya Ganguli.
\newblock {Deep Unsupervised Learning Using Nonequilibrium Thermodynamics}.
\newblock In \emph{International Conference on Machine Learning}, pp.\
  2256--2265, 2015.

\bibitem[Song \& Ermon(2019)Song and Ermon]{song2019generative}
Yang Song and Stefano Ermon.
\newblock {Generative Modeling by Estimating Gradients of the Data
  Distribution}.
\newblock In \emph{Advances in Neural Information Processing Systems},
  volume~32. Neural information processing systems foundation, 2019.

\bibitem[Song et~al.(2021{\natexlab{a}})Song, Durkan, Murray, and
  Ermon]{song2021maximum}
Yang Song, Conor Durkan, Iain Murray, and Stefano Ermon.
\newblock {Maximum Likelihood Training of Score-Based Diffusion Models}.
\newblock In \emph{Advances in Neural Information Processing Systems},
  2021{\natexlab{a}}.

\bibitem[Song et~al.(2021{\natexlab{b}})Song, Sohl-Dickstein, Kingma, Kumar,
  Ermon, and Poole]{song2021score}
Yang Song, Jascha Sohl-Dickstein, Diederik~P Kingma, Abhishek Kumar, Stefano
  Ermon, and Ben Poole.
\newblock {Score-Based Generative Modeling through Stochastic Differential
  Equations}.
\newblock In \emph{International Conference on Learning Representations},
  2021{\natexlab{b}}.

\bibitem[Trippe et~al.(2023)Trippe, Yim, Tischer, Baker, Broderick, Barzilay,
  and Jaakkola]{trippe2023diffusion}
Brian~L Trippe, Jason Yim, Doug Tischer, David Baker, Tamara Broderick, Regina
  Barzilay, and Tommi Jaakkola.
\newblock {Diffusion probabilistic modeling of protein backbones in 3D for the
  motif-scaffolding problem}.
\newblock In \emph{International Conference on Learning Representations}, 2023.

\bibitem[Vempala \& Wibisono(2019)Vempala and Wibisono]{vempala2019rapid}
Santosh~S Vempala and Andre Wibisono.
\newblock {Rapid Convergence of the Unadjusted Langevin Algorithm: Isoperimetry
  Suffices}.
\newblock In \emph{Advances in Neural Information Processing Systems}, 2019.

\bibitem[Vincent(2011)]{vincent2011connection}
Pascal Vincent.
\newblock {A Connection Between Score Matching and Denoising Autoencoders}.
\newblock \emph{Neural Computation}, 23\penalty0 (7):\penalty0 1661--1674,
  2011.

\bibitem[Watson et~al.(2023)Watson, Juergens, Bennett, Trippe, Yim, Eisenach,
  Ahern, Borst, Ragotte, Milles, et~al.]{watson2023novo}
Joseph~L. Watson, David Juergens, Nathaniel~R. Bennett, Brian~L. Trippe, Jason
  Yim, Helen~E. Eisenach, Woody Ahern, Andrew~J. Borst, Robert~J. Ragotte,
  Lukas~F. Milles, et~al.
\newblock {De novo design of protein structure and function with RFdiffusion}.
\newblock \emph{Nature}, 620\penalty0 (7976):\penalty0 1089--1100, 2023.

\bibitem[Xu et~al.(2022)Xu, Yu, Song, Shi, Ermon, and Tang]{xu2022geodiff}
Minkai Xu, Lantao Yu, Yang Song, Chence Shi, Stefano Ermon, and Jian Tang.
\newblock {GeoDiff: A Geometric Diffusion Model for Molecular Conformation
  Generation}.
\newblock In \emph{International Conference on Learning Representations}, 2022.

\bibitem[Yang \& Wibisono(2022)Yang and Wibisono]{yang2022convergence}
Kaylee~Yingxi Yang and Andre Wibisono.
\newblock {Convergence in KL and R{\'{e}}nyi Divergence of the Unadjusted
  Langevin Algorithm Using Estimated Score}.
\newblock In \emph{NeurIPS 2022 Workshop on Score-Based Methods}, 2022.

\bibitem[Zhang \& Chen(2023)Zhang and Chen]{zhang2023fast}
Qinsheng Zhang and Yongxin Chen.
\newblock {Fast Sampling of Diffusion Models with Exponential Integrator}.
\newblock In \emph{International Conference on Learning Representations}, 2023.

\end{thebibliography}
\bibliographystyle{iclr2024_conference}

\newpage

\appendix

\section{Equivalence of Diffusion Models and Stochastic Localization}
\label{app:SLanddiffusions}

Suppose that $(X_t)_{t \geq 0}$ follows the OU SDE defined in \eqref{eq:forwardOU}. Using the integration by parts formula for continuous semimartingales \citep{legall2016brownian},
\begin{equation*}
    \rmd(e^t X_t) = e^t X_t \rmd t + e^t \{-X_t \rmd t + \sqrt{2} \rmd B_t\} = \sqrt{2} e^t \rmd B_t.
\end{equation*}
By the Dubins--Schwarz theorem \citep[Theorem 5.13]{legall2016brownian}, there is a process $(\hat W_s)_{s \geq 0}$ such that
\begin{equation*}
    \hat W_{e^{2s} - 1} = \int_0^s \sqrt{2} e^r \rmd B_r
\end{equation*}
and $(\hat W_s)_{s \geq 0}$ is a standard Brownian motion with respect to the filtration $(\F_{\tau(s)})_{s \geq 0}$ where $\tau(s) = \frac{1}{2}\log(1 + s)$. Then, for all $s \in (0,\infty)$ we can write
\begin{equation*}
    e^{\tau(s)} X_{\tau(s)} = X_0 + \hat W_s.
\end{equation*}
If we set $U_0 = 0$ and
\begin{equation*}
    U_s := s e^{\tau(1/s)} X_{\tau(1/s)} = s X_0 + s \hat W_{1/s}
\end{equation*}
for $s \in (0,\infty)$, then we observe that $(U_s)_{s \geq 0}$ satisfies the definition of the stochastic localization process in \eqref{eq:SLprocess}, since the law of $(s \hat W_{1/s})_{s \geq 0}$ is the same as the law of $(W_s)_{s \geq 0}$. Thus the forward diffusion process and the stochastic localization process are equivalent under the chance of time variables $t(s) = \frac{1}{2} \log (1 + s^{-1})$.

In addition, we see that conditioning on $U_s$ is equivalent to conditioning on $X_{t(s)}$ and thus $\mu_s$ and $q_{0|t}(\:\cdot\: | X_t)$ define the same distributions when $t = t(s)$. It follows that $\a_s(U_s)$ and $\m_t(X_t)$ have the same law and $\A_s(U_s)$ and $\Sig_t(X_t)$ have the same law when $t = t(s)$.

\section{Proofs of Stochastic Localization Results}
\label{app:proofofSLresults}

Propositions \ref{prop:SLSDE} and \ref{prop:SLcovarianceasderivative} are well-known and we reproduce the proofs here for convenience. Proposition \ref{prop:SLSDE} can be found for example in \citet{eldan2013thin, alaoui2022information, alaoui2022sampling} and our proof of Proposition \ref{prop:SLcovarianceasderivative} is based on the argument on pages 8--9 of \citet{eldan2020taming}.

\begin{proof}[Proof of Proposition \ref{prop:SLSDE}]
From \eqref{eq:SLprocess}, we can deduce that
\begin{equation*}
    \mu_s(\rmd \x) = \frac{1}{Z_s} \myexp \Big\{ \x \cdot U_s - \frac{s}{2} \|\x\|^2 \Big\} p_{\data}(\rmd \x),
\end{equation*}
where $Z_s = \int_{\R^d} \exp{\x \cdot U_s - \frac{s}{2} \|\x\|^2} p_{\data}(\rmd \x)$ is the normalizing constant. Therefore,
\begin{equation}
\label{eq:logLs}
    \rmd \log L_s(\x) = \x \cdot \rmd U_s - \frac{1}{2} \|\x\|^2 \rmd s - \rmd \log Z_s.
\end{equation}
Writing $h_s(\x) = \x \cdot U_s - \frac{s}{2} \|\x\|^2$ and using the definition of $U_s$ in \eqref{eq:SLprocess} plus $\rmd [W,W]_s = \rmd s$, we have $\rmd h_s(\x) = \x \cdot \rmd U_s - \frac{1}{2} \|\x\|^2 \rmd s$ and $\rmd [h(\x), h(\x)]_s = \|\x\|^2 \rmd s$. Since $h_s(\x)$ is a continuous semi-martingale and $\myexp$ is smooth, we may apply It\^o's lemma to $\exp{h_s(\x)}$ and integrate with respect to $p_{\data}(\x)$ to get
\begin{align*}
    \rmd Z_s & = \int_{\R^d} \Big(\rmd h_s(\x) + \frac{1}{2} \rmd [h(\x), h(\x)]_s \Big) e^{h_s(\x)} p_{\data}(\rmd \x) \\
    & = \int_{\R^d} \x \cdot \rmd U_s e^{h_s(\x)} p_{\data}(\rmd \x) \\
    & = Z_s (\a_s \cdot \rmd U_s).
\end{align*}
Then, via another application of It\^o's lemma, since $Z_s$ is continuous and $\log$ is smooth,
\begin{align*}
    \rmd \log Z_s & = \frac{\rmd Z_s}{Z_s} - \frac{1}{2} \frac{d [Z]_s}{Z_s^2} = \a_s \cdot \rmd U_s - \frac{1}{2} \|\a_s\|^2 \rmd s.
\end{align*}
Substituting this into \eqref{eq:logLs}, we see that
\begin{align*}
    \rmd \log L_s (\x) & = (\x - \a_s) \cdot (\rmd U_s - \a_s \rmd s) - \frac{1}{2} \|\x - \a_s\|^2 \rmd s \\
    & = (\x - \a_s) \cdot \rmd W'_s - \frac{1}{2} \|\x - \a_s\|^2 \rmd s
\end{align*}
where we recall the definition of $(W'_s)_{s \geq 0}$ from \eqref{eq:reverseSLSDE}. The result then follows via a final application of It\^o's lemma.
\end{proof}

\begin{proof}[Proof of Proposition \ref{prop:SLcovarianceasderivative}]
First, using Proposition \ref{prop:SLSDE} we have
\begin{align*}
    \rmd \a_s & = \rmd \lr{\int_{\R^d} \x L_s(\x) p_{\data}(\rmd \x)} \\
    & = \int_{\R^d} \x \rmd L_s(\x) p_{\data}(\rmd \x) \\
    & = \int_{\R^d} \x \otimes (\x - \a_s) L_s(\x) \cdot \rmd W'_s \; p_{\data}(\rmd \x).
\end{align*}
This implies that
\begin{align*}
    \rmd \a_s & = \E[\mu_s]{\xi \otimes (\xi - \a_s)} \rmd W_s' = \A_s \cdot \rmd W_s'.
\end{align*}
It then follows from It\^o's isometry that
\begin{equation*}
    \frac{\rmd}{\rmd s} \E{\a_s^{\otimes 2}} = \E{\A_s^2}.
\end{equation*}
The result then follows from the fact that $\E{\A_s} = \E[\mu_s]{\xi^{\otimes 2}} - \E{\a_s^{\otimes 2}}$.
\end{proof}

\section{Adaptations Required to Handle a General Covariance of $p_{\data}$}
\label{app:generalcovariance}

We briefly outline the changes required in our proofs to handle a data distribution with a general covariance matrix. The main changes required are in the proofs of Lemmas \ref{lem:expectationbounds} and \ref{lem:E12bounds} and Proposition \ref{prop:convergenceforward}. In this section, we replace Assumption \ref{ass:secondmoment} with the following.

\begin{assumption}
\label{ass:generalcovariance}
The data distribution $p_{\data}$ has finite second moments, with $M_2 := \E[p_{\data}]{\|X_0\|^2}$.
\end{assumption}

First, we note that the proofs of Lemmas \ref{lem:gradlogSDE}, \ref{lem:diffinequality} and \ref{lem:gradientformulae} go through unchanged, and so these results also hold in the more general setting. For Lemma \ref{lem:expectationbounds}, the proof of \eqref{eq:normgradientbound} can be adapted, replacing each time we use $\E{\|X_0\|^2} = d$ with $\E{\|X_0\|^2} = M_2$ and noting that since $X_0$ and $X_t - e^{-t} X_0$ are independent, we have $\E{\|X_t\|^2} = \E{\|(X_t - e^{-t} X_0)\ + e^{-t}X_0\|^2} = d \sigma_t^2 + e^{-2t} M_2$. We find that all instances of $M_2$ cancel in the final result and \eqref{eq:normgradientbound} holds unchanged. In addition, the proof of \eqref{eq:frobeniusnormbound} needs no alteration.

The only change in the proof of Lemma \ref{lem:E12bounds} comes towards the end, when we assert that $\E{\Tr(\Sig_t)} = \E{\E{\|X_0\|^2 | X_t} - \| \E{X_0 | X_t} \|^2} \leq \E{\|X_0\|^2} = d$ for $t \in [1,T]$. Under Assumption \ref{ass:generalcovariance}, this becomes $\E{\Tr(\Sig_t)} \leq \E{\|X_0\|^2} = M_2$. Propagating this through the rest of the proof, we find that \eqref{eq:E2} should be replaced by
\begin{equation}
    \sum_{k=0}^{N-1} \int_{t_k}^{t_{k+1}} \lr{ \int_{t_k}^t E^{(2)}_{t_k, s} \rmd s } \rmd t \lesssim \kappa M_2 + \kappa^2 d N.
\end{equation}

The final change that must be made is to Proposition \ref{prop:convergenceforward}, which must be replaced by the following more general version that can be proved along identical lines.

\begin{proposition}
\label{prop:generalconvergenceforward}
Under Assumption \ref{ass:generalcovariance}, we have $\KL{q_T}{\pi_d} \lesssim (d + M_2) e^{-2T}$ for $T \geq 1$.
\end{proposition}

Putting all of these changes together, we arrive at the following more general version of Theorem \ref{thm:KLearlystoppedbound}.

\begin{theorem}
\label{thm:generalbound}
Suppose that Assumptions \ref{ass:scoreapproximation} and \ref{ass:generalcovariance} hold, that $T \geq 1$, and that there is some $\kappa > 0$ such that for each $k = 0, \dots, N-1$ we have $\gamma_k \leq \kappa \min\{1, T-t_{k+1}\}$. Then,
\begin{equation*}
    \KL{q_{\delta}}{p_{t_N}} \lesssim \varepsilon^2_{\textup{score}} + \kappa^2 d N + \kappa d T + \kappa M_2 + (d + M_2) e^{-2T}.
\end{equation*}
\end{theorem}

Theorem \ref{thm:generalbound} holds even in the case where the covariance is not strictly positive definite, and in the case where it is unknown to our diffusion model algorithm. Since we expect $M_2$ to scale linearly in $d$ in most cases, we consider this result to be in essentially the same spirit as Theorem \ref{thm:KLearlystoppedbound}.

\section{Proof of Corollary \ref{cor:iterationcomplexity}}
\label{app:corproof}

\begin{proof}[Proof of Corollary \ref{cor:iterationcomplexity}]
For convenience, we assume that $N$ is even. We take $t_0 = 0$, $t_{N/2} = T-1$, and $t_{N} = T-\delta$, and pick $t_1, \dots, t_{N-1}$ such that $t_0, \dots, t_{N/2}$ are linearly spaced on $[0,T-1]$ and $T - t_{N/2}, \dots, T - t_N$ are an exponentially decaying sequence from $1$ to $\delta$ (illustrated in Figure \ref{fig:stepsizes}).

Then, $\gamma_k \leq \kappa \min \{1, T-t_{k+1}\}$ for each $k = 0, \dots, N-1$ if and only if $\kappa \geq (T-1)/(N/2)$ and $\kappa \geq (1/\delta)^{1/N} -  1$. The former is satisfied if we take $\kappa = \Omega \big(\frac{T}{N}\big)$ and the second is satisfied provided that $\kappa = \Omega \big(\frac{\log(1/\delta)}{N}\big)$, since $N \geq \log(1/\delta)$ and $e^x \leq 1 + (e-1)x$ for $x \leq 1$. Therefore, for some $\kappa = \Theta\big(\frac{T + \log(1/\delta)}{N}\big)$ we have $\gamma_k \leq \kappa \min\{1, T-t_{k+1}\}$ for each $k = 0, \dots, N-1$, proving the first part of Corollary \ref{cor:iterationcomplexity}.

For the second part, suppose that we set $T = \frac{1}{2} \log \big( \frac{d}{\varepsilon_{\textup{score}}^2} \big)$ and $N = \Theta\big(\frac{d (T + \log(1/\delta))^2}{\varepsilon_\textup{score}^2}\big)$. Then, we have $\kappa^2 d N = O(\varepsilon_{\textup{score}}^2)$, $\kappa d T = O(\varepsilon_{\textup{score}}^2)$, and $d e^{-2T} = O(\varepsilon_{\textup{score}}^2)$. We may therefore apply Theorem \ref{thm:KLearlystoppedbound} to get that $\KL{q_{\delta}}{p_{t_N}} = O(\varepsilon_{\textup{score}}^2)$. The bound on the iteration complexity then follows since $T$ depends only logarithmically on $d$ and $\varepsilon_{\textup{score}}$.
\end{proof}

\section{Omitted Proofs from Section \ref{sec:mainproofs}}
\label{app:proofssection}

Here, we provide the proofs of Lemmas \ref{lem:gradlogSDE}, \ref{lem:diffinequality}, \ref{lem:gradientformulae}, \ref{lem:expectationbounds} and \ref{lem:E12bounds} which were omitted from Section \ref{sec:mainproofs}.

\begin{proof}[Proof of Lemma \ref{lem:gradlogSDE}]
Recall that the reverse process $(Y_t)_{t \in [0,T]}$ satisfies
\begin{equation*}
    \rmd Y_t = \{Y_t + 2 \nabla \log q_{T-t}(Y_t) \} \rmd t + \sqrt{2} \rmd B'_t.
\end{equation*}
Since $\nabla \log q_{T-t}(\x)$ is smooth for $t \in [0,T)$, by It\^o's lemma we can write 
\begin{multline}
\label{eq:gradlogito}
    \rmd (\nabla \log q_{T-t}(Y_t)) =\lrcb{\nabla^2 \log q_{T-t}(X_t) \cdot \{Y_t + 2 \nabla \log q_{T-t}(Y_t)\} + \Delta (\nabla \log q_{T-t})(Y_t) } \rmd t \\ + \frac{\rmd (\nabla \log q_{T-t})(Y_t)}{\rmd t} \rmd t + \sqrt{2} \nabla^2 \log q_{T-t}(Y_t) \cdot \rmd B_t'.
\end{multline}
The Fokker--Planck equation for the forward process is
\begin{equation*}
    \rmd q_t(\x) = \{ - \nabla \cdot (- \x q_t(\x)) + \Delta q_t(\x) \} \rmd t,
\end{equation*}
from which we can deduce
\begin{equation*}
    \rmd (\log q_t) (\x) = \{ d + \x \cdot \nabla \log q_t(\x) + \Delta \log q_t(\x) + \|\nabla \log q_t(\x)\|^2 \} \rmd t.
\end{equation*}
It follows that
\begin{multline*}
    \frac{\rmd (\nabla \log q_{T-t})}{\rmd t}(\x) = - \{ \nabla \log q_{T-t}(\x) + \nabla^2 \log q_{T-t}(\x) \cdot \x + \nabla (\Delta \log q_{T-t}(\x)) \\ + 2 \nabla^2 \log q_{T-t}(\x) \cdot \nabla \log q_{T-t}(\x) \},
\end{multline*}
by interchanging the order of the derivative operators. Substituting this into (\ref{eq:gradlogito}) and simplifying, we obtain the desired result.
\end{proof}

\begin{proof}[Proof of Lemma \ref{lem:diffinequality}]
First, expanding \eqref{eq:expectationderivative1} we get
\begin{align*}
    & \frac{\rmd}{\rmd t} \E[Q]{\|\nabla \log q_{T-t}(Y_t)\|^2} + 2 \E[Q]{\|\nabla \log q_{T-t}(Y_t)\|^2} \\ 
    & \hspace{20mm} - 2 e^{-(t-s)} \frac{\rmd}{\rmd t} \E[Q]{\nabla \log q_{T-s}(Y_s) \cdot \nabla \log q_{T-t}(Y_t)} \\ 
    & \hspace{40mm} - 2 e^{-(t-s)} \E[Q]{\nabla \log q_{T-s}(Y_s) \cdot \nabla \log q_{T-t}(Y_t)} \\
    & \hspace{0mm} =  2 \E[Q]{\| \nabla^2 \log q_{T-t}(Y_t) \|^2_F}.
\end{align*}
Combined with \eqref{eq:expectationderivative2}, this shows that
\begin{equation*}
    \frac{\rmd}{\rmd t} \E[Q]{\|\nabla \log q_{T-t}(Y_t)\|^2} + 2 \E[Q]{\|\nabla \log q_{T-t}(Y_t)\|^2} = 2 \E[Q]{\| \nabla^2 \log q_{T-t}(Y_t) \|^2_F}.
\end{equation*}
Then,
\begin{align*}
    \frac{\rmd E_{s,t}}{\rmd t} & = \frac{\rmd}{\rmd t} \E[Q]{\|\nabla \log q_{T-t}(Y_t)\|^2} - 2 \frac{\rmd}{\rmd t} \E[Q]{\nabla \log q_{T-s}(Y_s) \cdot \nabla \log q_{T-t}(Y_t)} \\
    & = 2 \E[Q]{\| \nabla^2 \log q_{T-t}(Y_t) \|^2_F} - 2 \E[Q]{\|\nabla \log q_{T-t}(Y_t)\|^2} \\
    & \hspace{10mm} + 2 \E[Q]{\nabla \log q_{T-s}(Y_s) \cdot \nabla \log q_{T-t}(Y_t)}
\end{align*}
which rearranges to give \eqref{eq:lemdiffinequality}.
\end{proof}

\begin{proof}[Proof of Lemma \ref{lem:gradientformulae}]
Part (i) is a classical result, sometimes known as Tweedie's formula \citep{robbins1956empirical}. Part (ii) has been established in previous works (see e.g. \citet[Lemma C.2]{debortoli2022convergence} or \citet[Lemma 4.13]{lee2023convergence}). We provide proofs of both results for reference.

For (i), we have
\begin{equation*}
    \nabla \log q_t(\x_t) = \frac{1}{q_t(\x_t)} \int_{\R^d} \nabla \log q_{t | 0}(\x_t | \x_0) q_{0,t}(\x_0, \x_t) \rmd \x_0.
\end{equation*}
Since $q_{t|0}(\x_t | \x_0) = \N(\x_t; \x_0 e^{-t}, \sigma_t^2 I)$, it follows that $\nabla \log q_{t|0}(\x_t | \x_0) = -\sigma_t^{-2}(\x_t - \x_0 e^{-t})$. Therefore,
\begin{align*}
    \nabla \log q_{t}(\x_t) & = \E[q_{0 | t}(\cdot | \x_t)]{- \sigma_t^{-2}(\x_t - X_0 e^{-t})} \\
    & = - \sigma_t^{-2} \x_t + e^{-t} \sigma_t^{-2} \m_t.
\end{align*}

For (ii), we can write
\begin{align*}
    & \hspace{5mm} \nabla^2 \log q_t(\x_t) \\
    & = \frac{1}{q_t(\x_t)} \int_{\R^d} \nabla^2 \log q_{t | 0}(\x_t | \x_0) q_{0,t}(\x_0, \x_t) \rmd \x_0 \\
    & \hspace{5mm} + \frac{1}{q_t(\x_t)} \int_{\R^d} (\nabla \log q_{t|0}(\x_t | \x_0)(\nabla \log q_{t|0}(\x_t | \x_0))^T q_{0,t}(\x_0, \x_t) \rmd \x_0 \\
    & \hspace{5mm} - \frac{1}{q_t(\x_t^2)} \lr{\int_{\R^d} \nabla \log q_{t | 0}(\x_t | \x_0) q_{0,t}(\x_0, \x_t) \rmd \x_0}\lr{\int_{\R^d} \nabla \log q_{t | 0}(\x_t | \x_0) q_{0,t}(\x_0, \x_t) \rmd \x_0}^T \\
    & = - \frac{1}{\sigma_t^2} I + \E[q_{0|t}(\cdot | \x_t)]{\sigma_t^{-4}(\x_t - X_0 e^{-t})(\x_t - X_0 e^{-t})^T} \\
    & \hspace{5mm} - \E[q_{0|t}(\cdot | \x_t)]{- \sigma_t^2(\x_t - X_0 e^{-t})}\E[q_{0|t}(\cdot| \x_t)]{- \sigma_t^2(\x_t - X_0 e^{-t})}^T \\
    & = - \sigma_t^{-2} I + \sigma_t^{-4} {\cov}_{q_{0|t}(\cdot | \x_t)}(\x_t - X_0 e^{-t}) \\
    & = - \sigma_t^{-2} I + e^{-2t} \sigma_t^{-4} \Sig_t.
\end{align*}
\end{proof}

\begin{proof}[Proof of Lemma \ref{lem:expectationbounds}]
First, using the first part of Lemma \ref{lem:gradientformulae} and expanding, we see that
\begin{align*}
    \E[q_t]{\|\nabla \log q_t(X_t) \|^2} 
    & = \sigma_t^{-4} \E{\|X_t\|^2} - 2 e^{-t} \sigma_t^{-4} \E{X_t \cdot \m_t} + e^{-2t} \sigma_t^{-4} \E{\|\m_t\|^2},
\end{align*}
where all expectations are with respect to $X_t \sim q_t$. Then, 
\begin{equation*}
    \E{X_t \cdot \m_t} = \E{ X_t \cdot \E{X_0|X_t}  } = \E{X_t \cdot X_0} = \E{ X_0 \cdot \E{X_t|X_0}  } =  e^{-t} \E{\|X_0\|^2} = d e^{-t}.
\end{equation*}
Also, $\Tr(\Sig_t) = \E{\|X_0\|^2 | \x_t} - \|\m_t\|^2$, so $\E{\|\m_t\|^2} = d - \E{\Tr(\Sig_t)}$. We conclude that
\begin{align*}
    \E[q_t]{\|\nabla \log q_t(X_t) \|^2} 
    & = d \sigma_t^{-2} - \dot \sigma_t \sigma_t^{-3} \E{\Tr(\Sig_t)},
\end{align*}
where we have used $\dot \sigma_t \sigma_t = e^{-2t}$. This implies that
\begin{equation*}
    \E[Q]{ \| \nabla \log q_{T-s}(Y_s) \|^2} \leq d \sigma_{T-s}^{-2}.
\end{equation*}

Second, using the second part of Lemma \ref{lem:gradientformulae} along with the definition $\|A\|_F^2 = \Tr(A^T A)$ and expanding, we see that
\begin{equation*}
    \E[q_t]{\|\nabla^2 \log q_t(X_t)\|^2_F} = d \sigma_t^{-4} - 2 \dot \sigma_t \sigma_t^{-5} \E{\Tr (\Sig_t)} + \dot \sigma_t^2 \sigma_t^{-6} \E{\Tr(\Sig_t^2)}.
  \end{equation*}
Taking traces in Lemma \ref{lem:keySLresult}, we get
\begin{equation*}
    \frac{\sigma_t^4}{2 e^{-2t}} \frac{\rmd}{\rmd t} \E{\Tr(\Sig_t)} =\frac{\sigma_t^4}{2 \dot \sigma_t \sigma_t} \frac{\rmd}{\rmd t} \E{\Tr(\Sig_t)} =\frac{\sigma_t^3}{2 \dot \sigma_t} \frac{\rmd}{\rmd t} \E{\Tr(\Sig_t)} = \E{\Tr(\Sig_t^2)},
\end{equation*}
and since $\E{\Tr(\Sig_t^2)} \geq 0$, it follows that $\frac{\rmd}{\rmd t} \E{\Tr(\Sig_t)} \geq 0$. Therefore,
\begin{align*}
    \E[q_t]{\|\nabla^2 \log q_t(X_t)\|^2_F} & = d \sigma_t^{-4} - 2 \dot \sigma_t \sigma_t^{-5} \E{\Tr (\Sig_t)} + \frac{1}{2} \dot \sigma_t \sigma_t^{-3} \frac{\rmd}{\rmd t} \E{\Tr(\Sig_t)} \\
    & \leq d \sigma_t^{-4} + \frac{1}{2} \frac{\rmd}{\rmd t} \Big( \sigma_t^{-4} \E{\Tr(\Sig_t)} \Big),
\end{align*}
where we have used that $\sigma_t \dot{\sigma}_t \leq 1$. This implies that
\begin{align*}
    \E[Q]{ \|\nabla^2 \log q_{T-t} (Y_t) \|_F^2 } & \leq d \sigma_{T-t}^{-4} + \frac{1}{2} \frac{\rmd}{\rmd r} \big( \sigma_r^{-4} \E{\Tr(\Sig_r)} \big) |_{r = T-t} \\
    & \leq d \sigma_{T-t}^{-4} - \frac{1}{2} \frac{\rmd}{\rmd r} \big( \sigma_{T-r}^{-4} \E{\Tr(\Sig_{T-r})} \big) |_{r = t}.
\end{align*}
\end{proof}

\begin{proof}[Proof of Lemma \ref{lem:E12bounds}]
First we control the error terms $E^{(1)}_{t_k, s}$. If $s, t \in [0, T-1]$ then $\sigma_{T-s}^2, \sigma_{T-t}^2 \geq 1/2$ and so $E^{(1)}_{s,t} \leq 10 d$. We therefore have
\begin{align*}
    \sum_{k=0}^{M-1} \int_{t_k}^{t_{k+1}} \lr{ \int_{t_k}^t E^{(1)}_{t_k, s} \rmd s } \rmd t & \leq 5 d \sum_{k=0}^{M-1} \gamma_k^2 \\
    & \lesssim \kappa d T,
\end{align*}
since we have assumed that $\gamma_k \leq \kappa$. This proves the first part of \eqref{eq:E1}.

If $s,t \in [T-1, T-\delta]$ then $(T-s) / 2 \leq \sigma^2_{T-s} \leq 2 (T-s)$ and similarly for $t$. Therefore,
\begin{align*}
    \sum_{k=M}^{N-1} \int_{t_k}^{t_{k+1}} \lr{ \int_{t_k}^t E^{(1)}_{t_k, s} \rmd s } \rmd t & \leq 12d \sum_{k=M}^{N-1} \int_{t_k}^{t_{k+1}} \lr{ \int_{t_k}^t (T-s)^{-2} \rmd s } \\
    & \leq 12d \sum_{k=M}^{N-1} \frac{\gamma_k^2}{(T-t_{k+1})^2} \\
    & \lesssim \kappa^2 d N 
\end{align*}
since we have assumed that $\gamma_k \leq \kappa (T-t_{k+1})$. This proves the second part of \eqref{eq:E1}.

Next, we control the error terms $E^{(2)}_{t_k, s}$. Note that $\sigma_{T-t}^{-4}$ is increasing in $t$ and $\E{\Tr(\Sigma_{T-t})}$ is decreasing in $t$ by Lemma \ref{lem:keySLresult}. Therefore,
\begin{align*}
    \sum_{k=0}^{N-1} \int_{t_k}^{t_{k+1}} \lr{ \int_{t_k}^t E^{(2)}_{t_k, s} \rmd s } \rmd t & \leq \sum_{k=0}^{N-1} \int_{t_k}^{t_{k+1}} \lr{\sigma_{T-t_k}^{-4} \E{\Tr(\Sig_{T-t_k})} - \sigma_{T-t}^{-4} \E{\Tr(\Sig_{T-t})}} \rmd t \\
    & \leq \sum_{k=0}^{N-1} \gamma_k \lr{\sigma_{T-t_k}^{-4} \E{\Tr(\Sig_{T-t_k})} - \sigma_{T-t_k}^{-4} \E{\Tr(\Sig_{T-t_{k+1}})}}.
\end{align*}
We split this sum into $k = 0, \dots, M-1$ and $k = M, \dots, N-1$. For $k = 0, \dots, M-1$, we have $\gamma_k \sigma_{T-t_k}^{-4} \leq 4 \gamma_k \leq 4 \kappa$ and so
\begin{align*}
    & \hspace{5mm} \sum_{k=0}^{M-1} \gamma_k \lr{\sigma_{T-t_k}^{-4} \E{\Tr(\Sig_{T-t_k})} - \sigma_{T-t_k}^{-4} \E{\Tr(\Sig_{T-t_{k+1}})}} \\
    & \leq 4 \kappa  \sum_{k=0}^{N-1}  \lr{\E{\Tr(\Sig_{T-t_k})} - \E{\Tr(\Sig_{T-t_{k+1}})}} \\
    & \leq 4 \kappa \E{\Tr (\Sig_{T})}.
\end{align*}
For $k = M, \dots, N-1$ we have $\gamma_k \sigma_{T-t_k}^{-4} \leq 4 \gamma_k / (T-t_k)^2 \leq 4 \kappa / (T-t_k)$ and so
\begin{align*}
     & \hspace{5mm} \sum_{k=M}^{N-1} \gamma_k \lr{\sigma_{T-t_k}^{-4} \E{\Tr(\Sig_{T-t_k})} - \sigma_{T-t_k}^{-4} \E{\Tr(\Sig_{T-t_{k+1}})}} \\
     & \leq 4 \kappa \sum_{k=M}^{N-1} \frac{1}{(T-t_k)} \lr{\E{\Tr(\Sig_{T-t_k})} - \E{\Tr(\Sig_{T-t_{k+1}})}} \\
     & \leq 4 \kappa \E{\Tr(\Sig_1)} + 4 \kappa \sum_{k=M+1}^{N-1} \frac{\gamma_{k-1}}{(T - t_k)(T - t_{k-1})} \E{\Tr(\Sig_{T-t_k})} \\
     & \leq 4 \kappa \E{\Tr(\Sig_1)} + 4 \kappa^2 \sum_{k=M+1}^{N-1} \frac{1}{(T - t_k)} \E{\Tr(\Sig_{T-t_k})}.
\end{align*}
We then have $\E{\Tr(\Sig_t)} = \E{\E{\|X_0\|^2 | X_t} - \| \E{X_0 | X_t} \|^2} \leq \E{\|X_0\|^2} = d$ for $t \in [1,T]$ and
\begin{align*}
    \E{\Tr(\Sig_t)} & = e^{2t} \E{\Tr(\cov(X_0 e^{-t} - X_t | X_t))} \\
    & = e^{2t} \E{\E{\|X_0 e^{-t} - X_t \|^2 | X_t } - \|\E{X_0 e^{-t} - X_t | X_t }\|^2} \\
    & \leq e^{2t} \E{\|X_0 e^{-t} - X_t \|^2} \\
    &\leq d e^{2t} \sigma_t^2 \\
    & \leq 16 d t
\end{align*}
for $t \in (0,1]$. Putting this together, we conclude that
\begin{align*}
    \sum_{k=0}^{N-1} \int_{t_k}^{t_{k+1}} \lr{ \int_{t_k}^t E^{(2)}_{t_k, s} \rmd s } \rmd t & \leq 4 \kappa \E{\Tr (\Sig_{T})} + 4 \kappa \E{\Tr(\Sig_1)} \\
    & \hspace{5mm} + 4 \kappa^2 \sum_{k=M+1}^{N-1} \frac{1}{(T - t_k)} \E{\Tr(\Sig_{T-t_k})} \\
    & \lesssim \kappa d + \kappa^2 d N.
\end{align*}
This proves completes the proof of \eqref{eq:E2}.
\end{proof}

\section{Application of Girsanov's Theorem}
\label{app:Girsanov}

We now recall the proof of Proposition \ref{prop:Girsanov}. The following approximation argument is essentially identical to that of \citet[Section 5.2]{chen2023sampling} and we reproduce it here simply for clarity of presentation.

The main ingredient in the proof will be Girsanov's theorem, which we recall below. The version we state can be obtained from Theorem 4.13 combined with Theorem 5.22 and Pages 136--139 in \citet{legall2016brownian}.

\begin{proposition}[Girsanov's Theorem]
\label{prop:standardGirsanov}
Suppose that $(\Omega, \F, (\F_t)_{t \geq 0}, Q)$ is a filtered probability space and $(b_t)_{t \in [0,T]}$ is an adapted process on this space such that $\bb{E}_{Q}\big[\int_0^T \| b_s \|^2 \rmd s\big] < \infty$. Let $(B_t)_{t \geq 0}$ be a $Q$-Brownian motion and define $\cal{L}_t = \int_0^t b_s \rmd B_s$. Then, $\cal{L}$ is a square-integrable $Q$-martingale. Moreover, if we define
\begin{equation*}
    \cal{E}(\cal{L})_t = \exp{\int_0^t b_s \rmd B_s - \frac{1}{2} \int_0^t \|b_s\|^2 \rmd s}
\end{equation*}
for $t \in [0,T]$ and suppose that $\E[Q]{\cal{E}(\cal{L})_T} = 1$ then $\cal{E}(\cal{L})$ is a $Q$-martingale and so we may define a new measure $P = \cal{E}(\cal{L})_T Q$. Then, the process
\begin{equation*}
    \beta_t = B_t - \int_0^t b_s \rmd s
\end{equation*}
is a $P$-Brownian motion.
\end{proposition}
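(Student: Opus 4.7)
The plan is to prove the two halves of the statement separately: first, that $\cal{E}(\L)$ is a genuine $Q$-martingale under the hypothesis $\E[Q]{\cal{E}(\L)_T} = 1$; second, that the shifted process $\beta$ is a $P$-Brownian motion. That $\L$ itself is a square-integrable $Q$-martingale is standard, since it is the stochastic integral of an $L^2$ previsible process against a Brownian motion.

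For the martingale property of $\cal{E}(\L)$, I would apply It\^o's formula to the representation $\cal{E}(\L)_t = \exp{Z_t}$ with $Z_t := \L_t - \frac{1}{2} \int_0^t \|b_s\|^2 \d s$. Using $\d Z_t = b_t \cdot \d B_t - \frac{1}{2} \|b_t\|^2 \d t$ and $\d [Z, Z]_t = \|b_t\|^2 \d t$, the Stratonovich-correction and drift terms cancel exactly, leaving
\begin{equation*}
    \d \cal{E}(\L)_t = \cal{E}(\L)_t \, b_t \cdot \d B_t.
\end{equation*}
Thus $\cal{E}(\L)$ is a nonnegative continuous $Q$-local martingale, hence a supermartingale by Fatou's lemma applied to a localizing sequence. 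The hypothesis $\E[Q]{\cal{E}(\L)_T} = 1 = \cal{E}(\L)_0$ then upgrades this supermartingale to a true martingale, since any nonnegative supermartingale with constant mean must be a martingale. This legitimizes defining the probability measure $P = \cal{E}(\L)_T \cdot Q$.

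For the second part, I would invoke L\'evy's characterization: $\beta$ is a $P$-Brownian motion iff it is a continuous $P$-local martingale starting at zero with quadratic covariation $[\beta^i, \beta^j]_t = t \delta_{ij}$. Continuity and $\beta_0 = 0$ are immediate. Since $\beta$ differs from $B$ by a finite-variation process and the quadratic covariation is defined pathwise (and hence invariant under absolutely continuous changes of measure), $[\beta^i, \beta^j]_t = [B^i, B^j]_t = t \delta_{ij}$ under $P$. For the $P$-local martingale property, I would use the standard criterion that $M$ is a $P$-local martingale iff $M \, \cal{E}(\L)$ is a $Q$-local martingale. Applying It\^o's product rule componentwise with $\d \beta^i_t = \d B^i_t - b^i_t \, \d t$, $\d \cal{E}(\L)_t = \cal{E}(\L)_t \, b_t \cdot \d B_t$, and $\d [\beta^i, \cal{E}(\L)]_t = \cal{E}(\L)_t \, b^i_t \, \d t$, the finite-variation terms cancel and one obtains
\begin{equation*}
    \d (\beta^i_t \cal{E}(\L)_t) = \cal{E}(\L)_t \, \d B^i_t + \beta^i_t \cal{E}(\L)_t \, b_t \cdot \d B_t,
\end{equation*}
which is manifestly a $Q$-local martingale, as required.

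The main obstacle is the upgrade from local to true martingale for $\cal{E}(\L)$: without extra integrability, nonnegative local martingales can lose mass and fail to be genuine martingales (consider, e.g., $b_s$ chosen so that $\cal{E}(\L)$ hits zero in finite time). The explicit hypothesis $\E[Q]{\cal{E}(\L)_T} = 1$ is the minimal assumption needed to bypass this via the supermartingale--martingale upgrade described above. In practice this hypothesis is typically verified through sufficient conditions such as Novikov's criterion $\E[Q]{\exp{\frac{1}{2} \int_0^T \|b_s\|^2 \d s}} < \infty$ or Kazamaki's condition, but neither is required for the statement as phrased.
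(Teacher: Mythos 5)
Your proof is correct and is the standard textbook argument: exponential of $\L$ as a nonnegative local martingale via It\^o, upgrade from supermartingale to martingale using the normalization hypothesis $\E[Q]{\cal{E}(\L)_T}=1$, and L\'evy's characterization together with the criterion that $M$ is a $P$-local martingale iff $M\,\cal{E}(\L)$ is a $Q$-local martingale. The paper does not in fact prove Proposition \ref{prop:standardGirsanov} itself—it simply cites Le~Gall (Theorem 4.13, Theorem 5.22, and pp.\ 136--139)—and your argument is essentially the one found there, so the two approaches coincide.
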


\begin{proof}[Proof of Proposition \ref{prop:Girsanov}]
We will apply Girsanov's theorem on the interval $[0,t_N]$ in the case where $Q$ is the path measure of the solution to \eqref{eq:reverseSDE}, $(B'_t)_{t \geq 0}$ is our $Q$-Brownian motion, and
\begin{equation*}
    b_t = \sqrt{2} \lrcb{s_\theta(Y_{t_k}, T-t_k) - \nabla \log q_{T-t}(Y_t)}
\end{equation*}
for $t \in [t_k, t_{k+1}]$ and each $k = 0, \dots, N-1$. Note that $(b_t)_{t \in [0,t_N]}$ is an adapted process and we have
\begin{equation*}
    \E[Q]{\int_0^{t_N} \| b_s \|^2 \rmd s} = 2 \sum_{k=0}^{N-1} \int_{t_k}^{t_{k+1}} \E[Q]{\|\nabla \log q_{T-t}(Y_t) - s_\theta(Y_{t_k}, T-t_k)\|^2} \rmd t < \infty
\end{equation*}
by the assumptions of Proposition \ref{prop:Girsanov}. Therefore, if we define $\cal{L}_t = \int_0^t b_s \rmd B'_s$ as in Proposition \ref{prop:standardGirsanov} then $(\cal{E}(\cal{L})_t)_{t \in [0,T]}$ is a continuous local martingale \citep[Proposition 5.11]{legall2016brownian}. So, we can find an increasing sequence of stopping times $(T_n)_{n \geq 1}$ such that $T_n \rightarrow t_N$ almost surely and $(\cal{E}(\cal{L})_{t \wedge T_n})_{t \in [0,t_N]}$ is a continuous martingale for each $n$.

Let us define $\cal{L}_t^n = \int_0^t b_s \ind_{[0,T_n]}(s) \rmd B_s'$ for all $t \in [0,t_N]$ and $n \geq 1$. Then we have $\cal{E}(\cal{L})_{t \wedge T_n} = \cal{E}(\cal{L}^n)_t$, so $\cal{E}(\cal{L}^n)$ is a continuous martingale, and it follows that $\E[Q]{\cal{E}(\cal{L}^n)_{t_N}} = 1$. Therefore, we may apply Girsanov's theorem (Proposition \ref{prop:standardGirsanov}) to $\cal{L}^n$ on the interval $[0,t_N]$. We deduce that we may define a new probability measure $P^n := \cal{E}(\cal{L}^n)_{t_N} Q$ and a new process
\begin{equation*}
    \beta^n_t = B_t' - \int_0^t b_s \ind_{[0,T_n]}(s) \rmd s
\end{equation*}
such that $(\beta^n_t)_{t \in [0,t_N]}$ is a $P^n$-Brownian motion.

Since \eqref{eq:reverseSDE} holds almost surely under $Q$, we see that
\begin{equation*}
    \rmd Y_t = \{Y_t + 2 s_\theta(Y_{t_k}, T-t_k)\} \ind_{[0,T_n]}(t) \rmd t + \{Y_t + 2 \nabla \log q_{T-t}(Y_t) \} \ind_{[T_n, t_N]}(t) \rmd t + \sqrt{2} \rmd \beta^n_t.
\end{equation*}
In addition,
\begin{align}
    \KL{Q}{P^n} & = \E[Q]{\log \frac{\rmd Q}{\rmd P^n}} = - \E[Q]{\log \cal{E}(\cal{L}^n)_{t_N}} \nonumber \\
    & = \E[Q]{ - \cal{L}_{T_n} + \frac{1}{2} \int_0^{T_n} \|b_s\|^2 \rmd s} \nonumber \\
    & \leq \sum_{k=0}^{N-1} \int_{t_k}^{t_{k+1}} \E[Q]{\|\nabla \log q_{T-t}(Y_t) - s_\theta(Y_{t_k}, T-t_k)\|^2} \rmd t, \label{eq:KLQPnbound}
\end{align}
since $\cal{L}$ is a $Q$-martingale.

Now, we consider coupling $P^n$ for each $n$ and $P^{q_T}$ by taking a fixed probability space and a single Brownian motion $(W_t)_{t \geq 0}$ on that space and defining the processes $(Y^n_t)_{t \in [0,t_N]}$ and $(Y_t)_{t \in [0,t_N]}$ via
\begin{equation*}
    \rmd Y^n_t = \{Y_t^n + 2 s_\theta(Y_{t_k}^n, T-t_k)\} \ind_{[0,T_n]}(t) \rmd t + \{Y_t^n + 2 \nabla \log q_{T-t}(Y_t^n) \} \ind_{[T_n, t_N]}(t) \rmd t + \sqrt{2} \rmd W_t
\end{equation*}
and
\begin{equation*}
    \rmd Y_t = \{Y_t + 2 s_\theta(Y_{t_k}, T-t_k)\} \rmd t + \sqrt{2} \rmd W_t,
\end{equation*}
and taking $X_0 \sim q_T$ and setting $X_0^n = X_0$ for each $n$. Then, the law of $Y^n$ is $P^n$ for each $n$ and the law of $Y$ is $P^{q_T}$.

Fix $\varepsilon > 0$ and define $\pi_\varepsilon : \cal{C}([0,t_N]; \R^d) \rightarrow \cal{C}([0,t_N]; \R^d)$ by $\pi_\varepsilon(\omega)(t) = \omega(t \wedge (t_N - \varepsilon))$ for $t \in [0,t_N]$. Then, $\pi_\varepsilon(Y^n) \rightarrow \pi_\varepsilon(Y)$ uniformly over $[0,t_N]$ almost surely and hence $(\pi_\varepsilon)_\# P^n \rightarrow (\pi_\varepsilon)_\# P^{q_T}$ weakly \citep[Lemma 12]{chen2023sampling}. We then have that
\begin{align}
    \KL{(\pi_\varepsilon)_\# Q}{(\pi_\varepsilon)_\# P^{q_T}} & \leq \liminf_{n \rightarrow \infty} \KL{(\pi_\varepsilon)_\# Q}{(\pi_\varepsilon)_\# P^n} \nonumber \\
    & \leq \liminf_{n \rightarrow \infty} \KL{Q}{P^n} \nonumber \\
    & \leq \sum_{k=0}^{N-1} \int_{t_k}^{t_{k+1}} \E[Q]{\|\nabla \log q_{T-t}(Y_t) - s_\theta(Y_{t_k}, T-t_k)\|^2} \rmd t, \label{eq:KLboundepsilon}
\end{align}
where in the first line we have used the lower semicontinuity of the KL divergence \citep[Lemma 9.4.3]{ambrosio2005gradient}, in the second line we have used the data processing inequality, and in the third line we have used \eqref{eq:KLQPnbound}.

Finally, letting $\varepsilon \rightarrow 0$ we have that $\pi_\varepsilon(\omega) \rightarrow \omega$ uniformly on $[0,T]$ \citep[Lemma 13]{chen2023sampling}, and hence $\KL{(\pi_\varepsilon)_\# Q}{(\pi_\varepsilon)_\# P^{q_T}} \rightarrow \KL{Q}{P^{q_T}}$ \citep[Corollary 9.4.6]{ambrosio2005gradient}. We therefore conclude by taking $\varepsilon \rightarrow 0$ in \eqref{eq:KLboundepsilon}.
\end{proof}

\section{Convergence of Forward Process}
\label{app:convergenceforward}

We show that the forward OU process converges exponentially in KL divergence under Assumption \ref{ass:secondmoment}. We note that the exponential convergence of the OU process under various metrics is well-established, and the particular result we prove here is very similar to Lemma 9 in \citet{chen2023improved}.

\begin{proof}[Proof of Proposition \ref{prop:convergenceforward}]
Since $q_{t|0}(\x_t | \x_0) = \N(\x_t ; \x_0 e^{-t}, \sigma_t^2 I_d)$, we have
\begin{equation*}
    \KL{q_{t | 0}(\:\cdot\: | \x_0)}{\pi_d} = \frac{1}{2} \lrcb{d \log \sigma_t^{-2} - d + d\sigma_t^2 + \|e^{-t}\x_0\|^2}.
\end{equation*}
By the convexity of the KL divergence,
\begin{align*}
    \KL{q_T}{\pi_d} & = \textup{KL} \Big( \int_{\R^d} q_{T|0}(\:\cdot\: | \x_0) p_{\data}(\rmd \x_0) \Big\| \pi_d \Big) \\
    & \leq \int_{\R^d} \KL{q_{T | 0}(\:\cdot\: | \x_0)}{\pi_d} p_{\data}(\rmd \x_0) \\
    & = \frac{1}{2} \lrcb{d \log \sigma_T^{-2} - d + d\sigma_T^2 + e^{-2T} \E[p_{\data}]{\|X_0\|^2}} \\
    & = - d \log (1 - e^{-2T}) \\
    & \lesssim d e^{-2T}
\end{align*}
for $T \geq 1$, where we have used that $\E[p_{\data}]{\|X_0\|^2} = d$ since $\cov(p_{\data}) = I_d$.
\end{proof}

\section{Linear Dependence on Data Dimension is Optimal}
\label{app:linearoptimal}

Suppose that we have a data distribution $p_\ast$ on $\R^d$ such that a diffusion model approximating $p_\ast$ using the exact scores, initialized from $q_T$ rather than $\pi_d$, and using a given sequence of discretization times $t_0, \dots, t_N$ has a KL error of $\varepsilon_\ast^2$. Then, if we consider approximating the product measure $p_\ast^{\otimes m}$ on $(\R^d)^m$, again using the exact score, initializing from $q_T^{\otimes m}$, and using the same sequence of discretization times, the reverse process will factorize across the $m$ copies of $\R^d$. Consequently, the total KL error will be $m \varepsilon_\ast^2$ by tensorization of the KL divergence. Thus the KL error of the diffusion model scales linearly in the data dimension in this case, demonstrating that the linear dependence of our KL bounds on $d$ is optimal.

\end{document}